\newcommand{\hypspace}{\mathcal{H}}
\newcommand{\vc}[1]{\mathrm{Pdim}{\left(#1\right)}}
\newcommand{\aggregation}{F}
\def\D{\mathcal D}
\def\H{\mathcal{C}}
\newcommand{\loss}{\mathcal{L}}
\newcommand{\truloss}{\mathcal{R}}
\newtheorem*{theorem*}{Theorem}
\def\D{\mathcal D}
\def\R{\mathbb{R}}
\def\H{\mathcal{C}}
\def\X{\mathcal{X}}
\def\Z{\mathcal{Z}}
\theoremstyle{plain}
\newtheorem{lemma}{Lemma}
\theoremstyle{definition}
\newtheorem{definition}{Definition}
\newtheorem{assumption}{Assumption}
\theoremstyle{remark}
\definecolor{gainsboro}{rgb}{0.86, 0.86, 0.86}
\newcommand{\suchthat}{\ensuremath{~\middle|~}}
\newcommand{\knowing}{\suchthat{}}
\newcommand{\card}[1]{\left\lvert{#1}\right\rvert}
\newcommand{\norm}[1]{\left\lVert{#1}\right\rVert}
\newcommand{\indexvar}[3]{\ensuremath{{{#3}^{\ifthenelse{\equal{#1}{}}{}{\left({#1}\right)}}_{#2}}}}
\newcommand{\indexvarNoPar}[3]{\ensuremath{{{#3}^{\ifthenelse{\equal{#1}{}}{}{\left{#1}\right}}_{#2}}}}
\providecommand{\iprod}[2]{\ensuremath{\left\langle #1,\,#2  \right\rangle}}
\providecommand{\norm}[1]{\ensuremath{\left\lVert#1\right\rVert }}
\newcommand{\proba}[2]{\ensuremath{\text{P}\!\left({#1}\ifthenelse{\equal{#2}{}}{}{\knowing{}{#2}}\right)}}
\renewcommand{\paragraph}[1]{\textbf{#1}~}
\title{Fine-Tuning Personalization in Federated Learning \\ to Mitigate Adversarial Clients}
\author{%
  Youssef Allouah\thanks{ Authors are listed in alphabetical order.} \And Abdellah El Mrini\thanks{Correspondence to : Abdellah El Mrini \href{mailto:abdellah.elmrini@epfl.ch}{<abdellah.elmrini@epfl.ch>.} 
  } \And Rachid Guerraoui \And Nirupam Gupta \and \textbf{Rafael Pinot} \\
  Ecole Polytechnique Fédérale de Lausanne (EPFL), Switzerland
}
\begin{document}

\maketitle

\begin{abstract}

Federated learning (FL) is an appealing paradigm that allows a group of machines (a.k.a. clients) to learn collectively while keeping their data local. However, due to the heterogeneity between the clients' data distributions, the model obtained through the use of FL algorithms may perform poorly on some client's data.
Personalization addresses this issue by enabling each client to have a different model tailored to their own data while simultaneously benefiting from the other clients' data. We consider an FL setting where some clients can be adversarial, and we derive conditions under which full collaboration fails. 
Specifically, we analyze the generalization performance of an interpolated personalized FL framework in the presence of adversarial clients, and we precisely characterize situations when full collaboration performs strictly worse than {\em fine-tuned} personalization. Our analysis determines  how much we should scale down the level of collaboration, according to data heterogeneity and the tolerable fraction of adversarial clients.   
We support our findings with empirical results on mean estimation and binary classification problems, considering synthetic and benchmark image classification datasets.

\end{abstract}

\section{Introduction} 
\label{sec:intro}


Federated learning (FL) is the de facto standard for a group of machines (also referred to as \emph{clients}) to learn a common model on their collective data~\citep{kairouz2021advances}. The benefit of this method is twofold: the clients (1) retain control over their local data (as they do not communicate them to a central server) and (2) share the computational workload during the learning procedure. Although FL is a promising learning paradigm, it also comes with its own technical challenges, the first of which is data heterogeneity. Indeed, each client holds only a portion of the common dataset, which is not necessarily a faithful representation of the entire population. This could lead to disagreements amongst clients on model updates, rendering the task of learning an accurate model in this context cumbersome. Nevertheless, when all machines correctly follow a prescribed algorithm, this problem can be solved using either a distributed implementation of the stochastic gradient descent (SGD) algorithm or variants of the federated averaging scheme~\citep{pmlr-v54-mcmahan17a,pmlr-v108-bayoumi20a,pmlr-v119-karimireddy20a}.
Another negative impact of data heterogeneity is the non-uniformity of model accuracy. Specifically, while an FL algorithm outputs a model that is accurate on average, the accuracy of this model on different clients' local data could vary significantly.

\textbf{Personalization.} Personalized FL attempts to address the aforementioned shortcoming on non-uniform model accuracy by having clients collaborate to design individual models that generalize well over their local data distributions~\citep{pmlr-v54-vanhaesebrouck17a,sattler2020clustered,fallah2020personalized,10.5555/3495724.3495918}. This approach is highly relevant to many modern machine learning applications, as it tailors the training of each model to the needs of the respective client. 
To get a more precise view of this approach, consider a general supervised learning task with $\mathcal{X}$ denoting the input space and $\mathcal{Y}$ the output space. 
Personalized FL involves $n$ clients which can communicate, through a central trusted machine (a.k.a. the \emph{server}). The clients hold local datasets $S_1, \dots, S_n$, comprising $m$ data points, drawn i.i.d.~from their respective local data distributions $\D_1,\ldots,\D_n$ with support in $\Z:=\mathcal{X}\times\mathcal{Y}$.
Given a parameter set $\Theta \subseteq \R^d$, we consider hypothesis functions of the form
$h_\theta \colon \mathcal{X} \to \mathcal{Y}'$,
parameterized by $\theta \in \Theta$, and we denote by $\hypspace$ the corresponding hypothesis class. We allow $\mathcal{Y}' \ne \mathcal{Y}$ to encompass cases in which the model outputs logits/probits instead of classes. The objective of each client $i$ is to find a model $\theta_i^* \in \Theta$ minimizing its local risk function
\begin{equation}
    \label{eq:true_loss} 
    \truloss_i{(\theta)} \coloneqq \mathbb{E}_{(x,y) \sim \D_i} \left[ \ell (h_\theta(x), y) \right] , \forall \theta \in \Theta,
\end{equation}
where $\ell \colon \mathcal{Y}' \times \mathcal{Y} \rightarrow \R_+$ is a non-negative point-wise loss function measuring how well a hypothesis $h_{\theta}$ 
fits the output $y$ on an input $x$. 
The mapping $(\theta, x ,y) \mapsto \ell(h_\theta(x),y)$ is assumed to be differentiable, with respect to 
$\theta$. 
To minimize the local risk above,
each client $i$ only has access to its local dataset $S_i$ yielding a local empirical loss $\loss_i{(\theta)} \coloneqq \frac{1}{m} \sum_{(x,y) \in S_i} \ell (h_\theta(x),y)$, and to the information sent by other clients on their respective datasets. 
This optimization problem can be solved in several ways, and we have at out disposal a rich literature on personalized FL schemes~\citep{mansour2020three,fallah2020personalized,marfoq2021federated}. 
We focus on a specific personalization scheme
wherein each client aims to solve for the \emph{$\lambda$-interpolated empirical loss} minimization problem:
\begin{equation}
    \label{eq:pers_objective}
    \min_{\theta_i \in \Theta} 
    \mathcal{L}_i^{\lambda}(\theta_i) \coloneqq (1-\lambda) \mathcal{L}_i(\theta_i) + \lambda\mathcal{L}(\theta_i),
\end{equation}
where $\loss{(\theta)} \coloneqq \tfrac{1}{n} \sum_{i=1}^n \loss_i{(\theta)},~\forall \theta \in \Theta$, and $\lambda \in [0,1]$. When $\lambda = 1$, the problem reduces to the standard FL objective and corresponds to a full collaboration amongst the clients. The case of $\lambda = 0$ represents the absence of collaboration, i.e., each client minimizes its local empirical loss. We use the terminology {\em fine-tuned} personalization to refer to the objective of solving for~\eqref{eq:pers_objective} when $\lambda\in (0,1)$ is determined by optimizing the learning performance in retrospect.

\textbf{Robustness.} Standard personalized FL approaches usually assume that all clients correctly follow a prescribed protocol, and hence do not consider the possibility of {\em adversarial} clients. Robustness to such clients, a.k.a~``Byzantine robustness'', refers to the design of FL schemes that yield accurate models even when some clients arbitrarily deviate from the algorithm and can send potentially corrupted information to other clients: This could result from poisonous data or code, or attacks from malicious players. Although Byzantine robustness has received significant attention in the FL community~\citep{10.1145/3616537}, it remains largely understudied in the context of personalized FL. 
In fact, prior work has shown that the presence of adversarial clients induces a fundamental optimization error that grows with the heterogeneity across clients' local data~\citep{karimireddy2022byzantinerobust,allouah2023robust}.
This optimization error is at odds with the improvement in generalization, which is offered by collaborating with other correct clients, to a point where collaboration is rendered vacuous. The main motivation for our work is to quantify the impact of adversarial clients on the relevance of collaboration for any correct client.

\subsection{Problem Setting}\label{ss:problem_setting}

Given the set of $n$ clients, we aim to tolerate the presence of at most $f < n/2$ adversarial clients. Such clients can respond arbitrarily to the queries made by the server, e.g., a gradient computation on their local dataset. We also call these clients Byzantine adversaries. The identity of Byzantine adversaries is a priori unknown to the correct (i.e., non-adversarial) clients, otherwise, the problem is rendered trivial. As we explain above, when no client is adversarial, the objective of each client $i$ is to minimize its own risk, defined in~\eqref{eq:true_loss}, by solving for the interpolated empirical loss minimization problem~\eqref{eq:pers_objective}.
However, in the presence of adversarial clients, a correct client cannot simply seek a solution to~\eqref{eq:pers_objective}, as they can never have truthful access to the information about Byzantine adversaries' datasets. A more reasonable goal is to minimize an interpolation between their local and the {\em correct} clients' average loss functions. We formally define the {\em robust-interpolated} objective for client $i$ as follows:
\begin{equation}
    \label{eq:pers_objective_rb}
    \min_{\theta_i \in \Theta} \mathcal{L}_i^{\lambda}(\theta_i) \coloneqq (1-\lambda) \mathcal{L}_i(\theta_i) + \lambda\mathcal{L}_\H(\theta_i),
\end{equation}
where $\H$ represents the set of correct clients, $\loss_\H{(\theta)} \coloneqq \tfrac{1}{\card{\H}} \sum_{i \in \H} \loss_i{(\theta)},~\forall \theta \in \Theta$, and $\lambda \in [0,1]$ is called the \emph{collaboration level}. Similar to~\eqref{eq:pers_objective}, $\lambda=0$ and $\lambda=1$, respectively, reduce to local learning which can be solved with local SGD~\citep{stich2018local}, and Byzantine-robust FL which can be tackled using a robust variant of distributed gradient descent~\citep{allouah2023fixing}. Recall that $\H$ is a priori unknown to the correct clients, hence they can neither have access to $\loss_\H$ nor to its gradients. A correct client can only approximate any information on $\loss_\H$, where the approximation error grows with both $f/n$ and data heterogeneity, shown in~\citep{karimireddy2022byzantinerobust,allouah2023robust}.

\subsection{Contributions}

We establish optimization and generalization guarantees on the interpolated personalized objective~\eqref{eq:pers_objective_rb} in the presence of adversarial clients, and show how the degree of collaboration $\lambda$, asymptotically navigates the trade-off between the fundamental optimization error due to adversarial clients and the improved generalization performance thanks to the collaboration with correct ones. 

This sheds light on specific situations in which full collaboration performs strictly worse than {\em fine-tuned} personalization, and even sometimes worse than local learning. Precisely, we extend an important result from domain adaptation \citep{shaiDifferentDomains2010}, showing that data from a distinct distribution cannot be beneficial for the local learning task when the heterogeneity is above a certain threshold, which depends on the local sample size and the hypothesis class complexity. In the context of Byzantine robust distributed learning, we show that even if data heterogeneity is low enough, the level of collaboration should be rescaled by a parameter that both depends on the fraction of adversaries and the gradient dissimilarity between correct clients. Essentially, while the effect of collaboration could be captured by substraction between the local task complexity and the heterogeneity of local distributions in a Byzantine-free context, we show that it is further captured by a multiplicative factor in the presence of adversaries. The higher the dissimilarity between correct gradients (and the number of adversarial clients in the system), the smaller $\lambda$ should be taken.

Our results show that for personalized FL, in most realistic contexts where heterogeneity between clients is not negligible, full collaboration is not optimal in the presence of adversarial clients. Moreover, the presence of these adversarial clients considerably limits the level of collaboration that could be useful for the correct clients, to the point where it is often more efficient to learn locally if the local task is sufficiently simple.

\subsection{Paper Outline}  
The remainder of the paper is organized as follows.
Section~\ref{s:mean_estimation}presents the mean estimation setting as a warm-up and special case of the framework presented in \cref{ss:problem_setting}. The goal is to give an intuition on settings in which personalization can or cannot help reduce the estimation error in the presence of adversarial clients. Section~\ref{s:binary_classification} presents our analysis in the general binary classification setting, where we quantify the tension between the optimization error and the generalization gap in the presence of Byzantine adversaries. We also empirically validate our theory on the MNIST dataset.
We defer full proofs to Appendix~\ref{a:proofs} and our full experimental setup to Appendix~\ref{a:experimental_setting}. We also include further information on related work in \cref{s:related_work}.

\section{Warm-up: Mean Estimation with Adversaries} \label{s:mean_estimation}

Before diving into the details of our results in a general supervised learning setting, let us first focus on the simple task of Byzantine-robust federated mean estimation in which the concepts of heterogeneity and task complexity are easier to grasp. We consider $n-f$ correct clients denoted $\mathcal{C}$. Each client $i \in \mathcal{C}$ has access to $m$ data point $y_i^{(1)}, ..., y_i^{(m)}$ sampled i.i.d. from their a local distribution $\D_i$ and independently from the other correct clients.  We assume that each local distribution $\D_{i}$ has a support in $\mathbb{R}^d$, an unknown finite mean $\mu_i$. Furthermore, we assume that all the distributions have the same finite variance $\sigma^2 := \mathbb{E}_{y \sim \D_{i} }\left[ \Vert y - \mu_i \Vert^2 \right]$. In this context, the objective of each client $i$ is to find their true mean $\mu_i$. To do so, we assume that each client attempts solve a problem of the form 

\begin{align}
   \min_{y_i^{\lambda} \in \mathbb{R}^d} \frac{1-\lambda}{m} \sum_{k \in [m]}\norm{y_i^{\lambda} - y_i^{(k)}}^2 + \frac{\lambda}{m (n-f)} \sum_{k \in [m]} \sum_{j \in \mathcal{C}} \norm{y_i^{\lambda} - y_j^{(k)}}^2. 
\end{align}

In fact, the optimal solution to this problem is $y_i^{\lambda *} := (1-\lambda) \widehat{y_i} + \lambda \widehat{y_{\H}}$ where $\widehat{y_i}:= \frac{1}{m} \sum_{j=1}^m y_i^{(j)}$ and $\widehat{y_{\H}} := \frac{1}{n-f} \sum_{i \in \mathcal{C}} \widehat{y_i}$. Recall that due to the presence of Byzantine adversaries, correct clients cannot compute $\widehat{y_{\H}}$ directly, but should instead use a robust aggregation rule $F$ to estimate $\widehat{y_{\H}}$. Specifically, we consider the natural robust estimator of $\widehat{y_i}^*$ \footnote{This estimator need not be optimal a priori. But it represents the state-of-the-art solution for robust FL.}, where $\widehat{y_{\H}}$ is replaced by a robust approximation 

\begin{equation}
\label{eq:mean_estimator}
    y_i^{\lambda} = (1-\lambda) \widehat{y_i} + \lambda F( \widehat{y_1}, \dots, \widehat{y_n}).
\end{equation} 

To formalize how good a robust aggregation rule is, we use the notion of $(f,\kappa)$-robustness~\citep{allouah2023fixing}, which we state below. 

\begin{definition}[$(f, \kappa)$-robustness]
\label{def:f-kappa}
Let $f < \nicefrac{n}{2}$ and $\kappa \geq 0$. An aggregation rule $\aggregation$ is said to be {\em $(f, \kappa)$-robust} if for any vectors $v_1, \ldots, v_n \in \R^d$, and any set $\mathcal{U} \subseteq [n]$ of size $n-f$, 
\begin{align*}
    \norm{\aggregation(v_1, \ldots, \, v_n) - \overline{v}_\mathcal{U}}^2 \leq \frac{\kappa}{n-f} \sum_{i \in \mathcal{U}} \norm{v_i - \overline{v}_\mathcal{U}}^2, \text{ with } \overline{v}_\mathcal{U} \coloneqq \frac{1}{n-f} \sum_{i \in \mathcal{U}} v_i.
\end{align*}
 
\end{definition}

\subsection{On the Effect of Collaboration}

We evaluate the performance of the estimator $\widehat{y_i}$ by the mean squared error defined as $\mathbb{E}\left[ \Vert y_i^{\lambda} - \mu_i \Vert^2 \right],$where the expectation is taken over the random samples $y_i^{(1)}, ..., y_i^{(m)}$ drawn i.i.d. from $\D_{i}$ and independently for every $i \in \mathcal{C}$. Then the following holds.

\begin{restatable}{proposition}{meanEstimation}
\label{prop:personal_mean_estimation}
Consider the mean estimation setting described. For any $i \in \mathcal{C}$, let $y_i^\lambda$ be as defined in~\eqref{eq:mean_estimator} with an aggregation rule $F$ that satisfies $(f,\kappa)$-robustness. Then the following holds true: 

\vspace{-6pt}
\begin{align*}
        \mathbb{E}\left[ \norm{ y_i^{\lambda} - \mu_i }^2 \right]  &\leq  3 \left( 1 - \frac{1}{n-f} \right)  \frac{\sigma^2 \Gamma(\lambda, \kappa)}{m}  + 3 \lambda^2 (\Vert \mu_i - \overline{\mu}_{\mathcal{C}}\Vert ^2 + \kappa \Delta^2).
\end{align*}

with $\overline{\mu}_\H :=\frac{1}{n-f} \sum_{i\in \H} \mu_i$, $\Delta^2:=  \frac{1}{n-f} \sum_{j \in \mathcal{C}} \norm{\mu_j - \overline{\mu}_\H}^2,$ and $\Gamma(\lambda, \kappa) :=  \lambda^2 (\kappa + 1) - 2 \lambda   + \frac{n-f}{n-f-1}$.
\end{restatable}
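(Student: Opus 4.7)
The plan is to decompose the estimation error into three uncorrelated (in expectation) pieces: a pure statistical variance piece, a robustness piece controlled by the $(f,\kappa)$-robustness of $F$, and a heterogeneity piece. Concretely, writing $\overline{\widehat{y}}_\H \coloneqq \tfrac{1}{n-f}\sum_{j\in\H}\widehat{y_j}$ and $\overline{\mu}_\H$ as in the statement, I would decompose
\begin{equation*}
    y_i^{\lambda}-\mu_i = \underbrace{(1-\lambda)(\widehat{y_i}-\mu_i)+\lambda(\overline{\widehat{y}}_\H-\overline{\mu}_\H)}_{A} \;+\; \underbrace{\lambda\bigl(F(\widehat{y_1},\dots,\widehat{y_n})-\overline{\widehat{y}}_\H\bigr)}_{B} \;+\; \underbrace{\lambda(\overline{\mu}_\H-\mu_i)}_{C},
\end{equation*}
and then apply the triangle inequality in the form $\|A+B+C\|^2\le 3(\|A\|^2+\|B\|^2+\|C\|^2)$. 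This already explains the global factor of $3$ appearing in the statement.

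Next I would compute $\mathbb{E}\|A\|^2$ by expanding the square. Since the $\widehat{y_j}$ are independent across correct clients with $\mathbb{E}\|\widehat{y_j}-\mu_j\|^2=\sigma^2/m$, the only surviving cross term is with $j=i$, giving $\mathbb{E}[\langle \widehat{y_i}-\mu_i,\overline{\widehat{y}}_\H-\overline{\mu}_\H\rangle]=\sigma^2/((n-f)m)$, so a short computation yields
\begin{equation*}
    \mathbb{E}\|A\|^2 = \tfrac{\sigma^2}{m}\Bigl[(1-\lambda)^2+\tfrac{2\lambda-\lambda^2}{n-f}\Bigr]
    = \bigl(1-\tfrac{1}{n-f}\bigr)\tfrac{\sigma^2}{m}\Bigl[\lambda^2-2\lambda+\tfrac{n-f}{n-f-1}\Bigr].
\end{equation*}
For $\mathbb{E}\|B\|^2$ I invoke $(f,\kappa)$-robustness to bound $\|F-\overline{\widehat{y}}_\H\|^2$ by $\tfrac{\kappa}{n-f}\sum_{j\in\H}\|\widehat{y_j}-\overline{\widehat{y}}_\H\|^2$, and then expand $\widehat{y_j}-\overline{\widehat{y}}_\H=(\widehat{y_j}-\mu_j)-(\overline{\widehat{y}}_\H-\overline{\mu}_\H)+(\mu_j-\overline{\mu}_\H)$ and take expectation. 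The deterministic-centered cross terms vanish, and a symmetric calculation to the one above yields the clean form $\mathbb{E}\|\widehat{y_j}-\overline{\widehat{y}}_\H\|^2 = \bigl(1-\tfrac{1}{n-f}\bigr)\tfrac{\sigma^2}{m}+\|\mu_j-\overline{\mu}_\H\|^2$, hence $\mathbb{E}\|B\|^2\le \lambda^2\kappa\bigl[(1-\tfrac{1}{n-f})\tfrac{\sigma^2}{m}+\Delta^2\bigr]$. Finally $\|C\|^2=\lambda^2\|\mu_i-\overline{\mu}_\H\|^2$ is deterministic.

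Combining the three bounds, the $\sigma^2/m$ contributions from $A$ and $B$ factor through the common prefactor $(1-\tfrac{1}{n-f})$, and the bracket becomes $\lambda^2-2\lambda+\tfrac{n-f}{n-f-1}+\lambda^2\kappa=\Gamma(\lambda,\kappa)$, exactly as claimed, while the remaining $\lambda^2(\|\mu_i-\overline{\mu}_\H\|^2+\kappa\Delta^2)$ terms group into the second summand of the proposition. I expect the main obstacle to be purely algebraic: getting the cross terms in $\mathbb{E}\|A\|^2$ right (the cancellation $2\lambda(1-\lambda)+\lambda^2=2\lambda-\lambda^2$ from the cross term) and then verifying that the $\sigma^2/m$ contribution of $B$ collapses neatly inside the $\Gamma(\lambda,\kappa)$ bracket rather than leaving a leftover term. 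Once that factorization is checked, no further inequality is needed.
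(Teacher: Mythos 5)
Your proposal is correct and follows essentially the same route as the paper's proof: the same three-term decomposition with the factor-of-$3$ inequality, the same use of $(f,\kappa)$-robustness on the aggregation error, and the same identities $\mathbb{E}\|A\|^2=\bigl(1-\tfrac{1}{n-f}\bigr)\tfrac{\sigma^2}{m}\bigl[\lambda^2-2\lambda+\tfrac{n-f}{n-f-1}\bigr]$ and $\mathbb{E}\|\widehat{y_j}-\overline{\widehat{y}}_\H\|^2=\bigl(1-\tfrac{1}{n-f}\bigr)\tfrac{\sigma^2}{m}+\|\mu_j-\overline{\mu}_\H\|^2$. The only cosmetic difference is that the paper regroups $A$ into a sum of independent terms before taking expectations, whereas you expand the square and compute the single surviving cross term directly; the algebra is equivalent.
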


\paragraph{Tightness of the bound.} While the right-hand side of Proposition~\ref{prop:personal_mean_estimation} may not be tight in general, we note that it is tight for $\lambda = 0$ and $\lambda = 1$, in the homogeneous setting ($\Delta = 0$), provided that we use robust averaging with $\kappa \in \mathcal{O}(f/n)$. Indeed, when $\lambda=0$ the squared error in estimating the mean of a distribution with variance $\sigma^2$ from $m$ i.i.d.~samples is in $\Omega (\sigma^2/m)$, refer~\citep{wu2017lecture}. Furthermore, when $\lambda=1$, the squared error in estimating the mean of a distribution with variance $\sigma^2$ is in $\Omega ( \frac{f +1}{n} \frac{\sigma^2}{m})$, refer~\citep{zhu2023byzantine}.

\paragraph{Interpretation.} Note that, in the above, the terms that represent the hardness of the local mean estimation task and the heterogeneity among correct clients are $\frac{\sigma^2}{m}$, and $\Vert \mu_i - \overline{\mu}_{\mathcal{C}}\Vert ^2 + \kappa \Delta^2$ respectively. Indeed $\sigma^2$ is the variance of each distribution, hence the local mean estimation task is essentially as hard as $\sigma^2$ is large with respect to the number of points $m$ each client has access to. Similarly, $\Delta^2$ essentially computes how far apart the true means of the correct clients are on average, and $\Vert \mu_i - \overline{\mu}_{\mathcal{C}}\Vert ^2$ penalizes especially the distance to the average of the correct clients for the one we observe (i.e., $i \in \H$). By minimizing the right-hand side of Proposition~\ref{prop:personal_mean_estimation}, we get 
\begin{equation}\label{eq:mean_estimation}
    \lambda^* = \frac{\left(1 -\frac{1}{n-f} \right) \frac{\sigma^2}{m}}{(\kappa + 1 ) \left(1 -\frac{1}{n-f} \right) \frac{\sigma^2}{m} + \Vert \mu_i - \overline{\mu}_{\mathcal{C}}\Vert ^2 + \kappa \Delta^2 } ,
\end{equation} 
which approaches $0$ as $\Vert \mu_i - \overline{\mu}_{\mathcal{C}}\Vert ^2 + \kappa \Delta^2$ grows and $1/(1+ \kappa)$ when $ \Vert \mu_i - \overline{\mu}_{\mathcal{C}}\Vert ^2 + \kappa \Delta^2 \ll \nicefrac{\sigma^2}{m}$. 

Note that $\kappa$ can be as small as $\mathcal{O}\left( \nicefrac{f}{n}\right)$, hence the above means that, when heterogeneity is limited and with a small enough fraction of adversarial clients, high level of collaborations can be used by each correct client. On the other hand, correct clients need not gain much by collaborating with other clients in highly heterogeneous regimes.

\subsection{Experimental Validation}

\begin{figure*}[!ht]
    \centering
    \includegraphics[width=\textwidth]{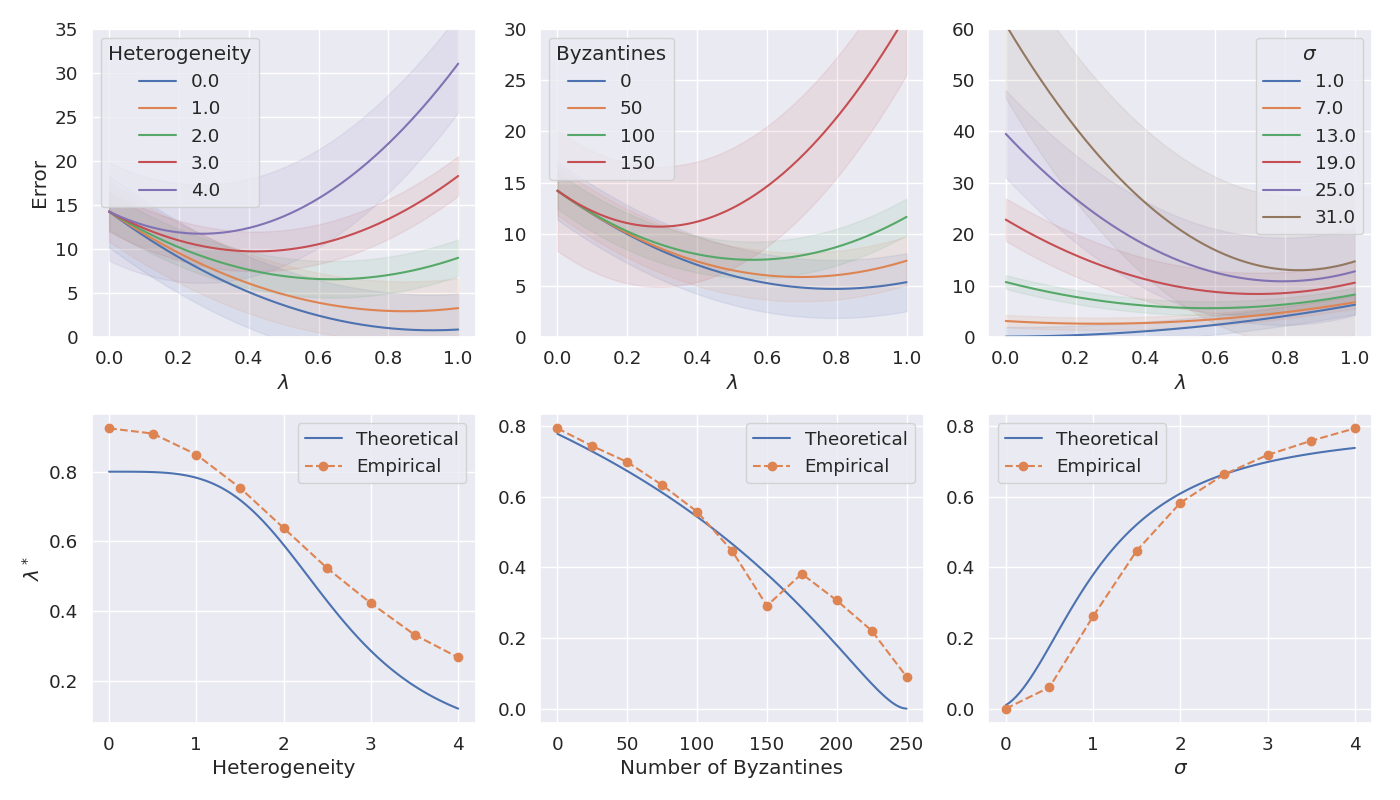}
    \caption{
    Impact of the heterogeneity ($\sigma_h$), number of Byzantine adversaries $f$ and the task complexity ($\sigma$). (Top) The average error for different values of $\lambda$, computed using $20$ random experiments. (Bottom) Comparison of theoretical $\lambda^*$ and empirical minimizer of the error. We fixed the following default values: $n=600, f=100, m=20, \sigma=15, \sigma_h= 2 $}
    \label{fig:three_exp}
\end{figure*}

To validate our theoretical observations on Byzantine-robust federated mean estimation, we run a series of experiments using simulated datasets sampled from 1-dimensional Gaussian distributions. Specifically, for each of the $n-f$ correct clients, we consider that their distribution is such that $\D_{i} = \mathcal{N}(\mu_i, \sigma^2)$, where the unknown local means $(\mu_i)_{i \in \mathcal{C}}$ have been sampled i.i.d. from a Gaussian distribution $\mathcal{N}(10, \sigma_h^2)$\footnote{We use a mean at $10$ to break the symmetry of the mean simulation around $0$. This is simply to present a case in which the sign-flipping attack is indeed disrupting the federated mean estimation procedure}, where $\sigma_h$ determines the expected squared distance between the local means of each correct client. Each honest client $i \in \mathcal{C}$ samples $m$ datapoints from $\D_{i}$ and the error is computed with respect to $\mu_i$. To evaluate the theoretical expression of $\lambda^*$, defined in \eqref{eq:mean_estimation}, in this scenario, we replace the distances of the form $(\mu_i - \overline{\mu}_{\mathcal{C} })^2$ by the variance $(1- \nicefrac{1}{n-f})\sigma_h^2$. We present in~\cref{fig:three_exp} the average error (squared distance to the true mean) of the estimator defined as per~\eqref{eq:mean_estimator} on $20$ runs, for different values of $\lambda$. The robust aggregation being used here is an NNM pre-aggregation rule \citep{allouah2023fixing} followed by trimmed mean \citep{pmlr-v80-yin18a} and adversarial clients implement the sign-flipping attack. 
Below we analyze our result presented in the first row of~\cref{fig:three_exp}.

\textbf{Heterogeneity. }
First, we study the impact of the heterogeneity level in \cref{fig:three_exp}. Doing so, we fix $n = 600, f = 100, m = 20, \sigma = 15$ and vary the level of heterogeneity $\sigma_h \in \{0, 1, 2, 3, 4\}$.  We observe that for low levels of heterogeneity, the optimal choice for $\lambda$ is close to $1$. In this case, personalization does not really help.


\textbf{Byzantine fraction.}
Second, we analyze the impact of the fraction of Byzantine adversaries. For \cref{fig:three_exp}, we fix $n=600, m=20, \sigma=15, \sigma_h=2$ and vary $f \in \{0,50,100,150\}$. As per our theoretical analysis, as the Byzantine fraction increases $\kappa$ should increase; hence the correct clients need to rely less and less on the global aggregate. However, simply using a local estimator can be detrimental as showcased by the red curve, which shows the case $f=150$. Then, by choosing the right collaboration, one reduces the error by $50\%$ compared to the local estimator.

\textbf{Task complexity.}
Finally, we study the impact of the task complexity, characterized by the quantity $\nicefrac{\sigma^2}{m}$. We fix $n = 600, f = 100, m = 20, \sigma_h = 2$ and vary $\sigma \in \{1,7,13,19,25,31 \}$. We observe, as expected, that the optimal level of collaboration level of collaboration moves away from $0$ as the task gets harder. However, we also observe that, since the fraction of Byzantine clients is non-negligible, the optimal choice of collaboration $\lambda$, even for very large values of $\sigma$, never goes to $1$.

\paragraph{Correspondence between the upper-bound and empirical observations.} The second line of \cref{fig:three_exp} shows a comparison between the optimal theoretical value predicted by our analysis and the empirical minimizer of the error on average.\footnote{$\kappa$ is replaced by $\nicefrac{f}{n-2f} $, following~\citep{allouah2023fixing}} We see that the value we predicted for $\lambda^*$
and the actual empirical best choice for $\lambda$ has very similar trends in all the settings we consider.

\section{Binary Classification with Adversaries} \label{s:binary_classification}

\label{sec:binary-theory}

In this section, we consider the more general problem of binary classification. We study the learning-theoretic setup given in Section~\ref{sec:intro} with binary output space $\mathcal{Y} := \{0,1\}$ and $\mathcal{Y}' = [0,1]$.
Throughout, we place ourselves in a hypothesis space $\hypspace$ of finite pseudo-dimension~\citep{Vidyasagar2003, mohri2018foundations}, denoted $\vc{\hypspace}$, which we recall reduces to the VC dimension for the $0-1$ loss. We further make the following assumptions on the loss functions. These assumptions are standard in the Byzantine robustness literature see, e.g., ~\citep{karimireddy2022byzantinerobust,allouah2023fixing}.

\begin{assumption}[$L$-smoothness, $\mu$-strong convexity]
\label{asp:opt}
    Each loss function $\loss_i, i \in \H$, is $L$-smooth and $\mu$-strongly convex. That is, for all $\theta, \theta' \in \Theta$, we have
    \begin{equation*}
        \frac{\mu}{2} \|\theta - \theta' \|^2  \leq \loss_i(\theta) - \loss_i(\theta') - \iprod{\nabla \loss_i(\theta')}{\theta-\theta'} \leq \frac{L}{2} \|\theta - \theta' \|^2. 
    \end{equation*}
\end{assumption}
 
\begin{assumption}[Bounded heterogeneity]
\label{asp:hetero}
There exists a real value $G$ such that for all $\theta{} \in \Theta$, we have
\begin{align*}
    \frac{1}{\card{\H}} \sum_{i \in \H}\norm{\nabla \loss_i(\theta{}) - \nabla \loss_{\H}(\theta{})}^2 \leq G^2.
\end{align*}
\end{assumption}

We additionally make the following assumption on the parameter set $\Theta$ to ensure that the Euclidean projection on $\Theta$ (denoted $\Pi_{\Theta}$) is well-defined, unique and that the minimizer of the interpolated loss is not pathological, i.e., is not in the border of $\Theta$.
\begin{assumption}\label{asp:constraint}
    The parameter set $\Theta$ is compact and convex. Moreover, for every $i \in \H, \lambda \in [0,1]$, the minimizer  of $\loss_i^\lambda$ is in the interior of $\Theta$.
\end{assumption}
Additionally, for our generalization analysis, we make the following assumption on the loss function:
\begin{assumption}
    \label{asp:bounded_loss}
    The loss function is bounded in $[0,1]$.
\end{assumption}

To conduct an analysis of the effect of collaboration on the generalization performance of correct clients when trying to solve~\eqref{eq:true_loss}, 
we proceed in two steps. We first evaluate the optimization error that a standard gradient descent algorithm incurs in our setting. Then we bound the generalization gap induced when minimizing~\eqref{eq:true_loss} and combine the two bounds into our main result in Theorem~\ref{th:full_gen_gap}.

\subsection{Algorithm \& Optimization Error}

We focus our analysis on a simple personalized variant of the robust distributed gradient descent algorithm, which is the standard algorithm in the Byzantine-robust FL literature and is shown to achieve tight optimization bounds~\citep{allouah2023fixing}.
Our variant, presented in Algorithm~\ref{alg:ipgd}, essentially corresponds to gradient descent on the function $\loss_i^\lambda$, but using a robust estimate $R_i^t$ (iteration 7) of the gradient of $\loss_\H$, which cannot be computed exactly due to Byzantine adversaries.
This robust estimate is computed using a robust aggregation $F$, e.g., trimmed mean or median.

\paragraph{On the execution of the algorithm.} Executing \cref{alg:ipgd} in practice implies that each client acts as a local server, and runs an independent federated learning procedure. We further assume that the clients have access to a communication protocol that allows them to broadcast their model to the other clients. This can either be done through the use of the central server or directly through a decentralized communication (i.e., without a server). This algorithm is not communication efficient and we believe that it could be improved in that direction. Nevertheless, it allows us to discuss the information-theoretic aspects of the problem in a simple manner. We first present the optimization error of \cref{alg:ipgd} in Lemma~\ref{th:str_cvx_lam} below.

\begin{algorithm}[h]
\begin{algorithmic}[1]
\caption{Interpolated Personalized Gradient Descent for client $i \in \mathcal{C}$ } \label{alg:ipgd} 
\Require 

Initialization $\theta_i^0$, aggregation rule $F$, learning rate $\eta$, number of iterations $T$,  and collaboration parameter $\lambda$.

\For{$t = 1 \dots T$}
    \State Broadcast $\theta_i^{t-1}$ to all clients
    \For {$j = 1 \ldots n$, $j \neq i$}
        \State Receive $g_{i,j}^t = \nabla \mathcal{L}_j(\theta_i^{t-1})$ from client $j$ \Comment{\textcolor{teal}{adversarial clients send corrupted gradients}}
    \EndFor     
     \State Compute local gradient $g_{i,i}^t = \nabla \mathcal{L}_i(\theta_i^{t-1})$
     \State Robustly aggregate $R_i^t = F(g_{i,1}^t, \dots, g_{i,n}^t) $
   
        \State Update and project local parameters $$ \theta_i^t = \Pi_\Theta\left(\theta_i^{t-1} - \eta \left((1-\lambda) g_{i,i}^t + \lambda R_i^t\right) \right)$$ 
\EndFor

\end{algorithmic}
\end{algorithm}

\begin{restatable}{lemma}{convergence}\label{th:str_cvx_lam}
 Let assumptions~\ref{asp:opt},~\ref{asp:hetero}, and~\ref{asp:constraint} hold. Consider Algorithm~\ref{alg:ipgd} with learning rate $\eta = \frac{1}{2L}$, $\lambda \in [0,1]$, and assume the aggregation function $F$ to be $(f,\kappa)$-robust. 
  For any $T \geq 1$, we have:
 \begin{align*}
   \loss_i^{\lambda} (\theta_i^T) - \loss_{i,*}^\lambda 
&\leq \frac{5 L \lambda^2 \kappa G^2}{\mu^2} + \left(1-\frac{\mu}{2L} \right)^{T}\frac{L}{\mu} \loss_0,
\end{align*}
where $\loss_0:= \loss_i^{\lambda} (\theta_i^0) - \loss_{i,*}^\lambda$ and $\mathcal{L}_{i,*}^{\lambda} \coloneqq \min\limits_{\theta \in \Theta} \mathcal{L}_{i}^{\lambda}(\theta)$.
\end{restatable}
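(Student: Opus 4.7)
My plan is to proceed in three steps: first bound the bias of the update direction via $(f,\kappa)$-robustness, then derive a one-step contraction on the squared distance to $\theta_i^{\lambda,*} \coloneqq \argmin_{\theta \in \Theta} \loss_i^\lambda(\theta)$, and finally unroll the recursion and convert to a function-value bound.

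Observe that $\loss_i^\lambda = (1-\lambda)\loss_i + \lambda \loss_\H$ inherits $L$-smoothness and $\mu$-strong convexity from Assumption~\ref{asp:opt} (by convex combination over $\H$), and that $\theta_i^{\lambda,*}$ lies in the interior of $\Theta$ by Assumption~\ref{asp:constraint}, so $\nabla \loss_i^\lambda(\theta_i^{\lambda,*}) = 0$. Writing $\tilde g^t \coloneqq (1-\lambda)g_{i,i}^t + \lambda R_i^t$ for the direction used at iteration $t$, I would decompose $\tilde g^t = \nabla \loss_i^\lambda(\theta_i^{t-1}) + e^t$, where the bias $e^t = \lambda\bigl(R_i^t - \nabla \loss_\H(\theta_i^{t-1})\bigr)$. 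Since the correct clients in $\H$ transmit truthful gradients whose average is exactly $\nabla \loss_\H(\theta_i^{t-1})$, instantiating Definition~\ref{def:f-kappa} with $\U = \H$ and then invoking Assumption~\ref{asp:hetero} yields the key bound
\begin{equation*}
\|e^t\|^2 \leq \lambda^2 \kappa G^2.
\end{equation*}

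For the one-step contraction I would start from the non-expansiveness of $\Pi_\Theta$, expand $\|\theta_i^{t-1} - \eta \tilde g^t - \theta_i^{\lambda,*}\|^2$, and handle the ``clean'' part $-2\eta \langle \nabla \loss_i^\lambda(\theta_i^{t-1}), \theta_i^{t-1} - \theta_i^{\lambda,*}\rangle + \eta^2 \|\tilde g^t\|^2$ via the co-coercivity consequence of smoothness, $\|\nabla \loss_i^\lambda(\theta)\|^2 \leq L \langle \nabla \loss_i^\lambda(\theta), \theta - \theta_i^{\lambda,*}\rangle$, together with $\mu$-strong convexity. The choice $\eta = 1/(2L)$ makes the cancellation $2\eta^2 L = \eta$ exact, collapsing these terms to the contraction factor $1 - \mu/(2L)$. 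Bounding the residual noise $2\eta^2\|e^t\|^2 - 2\eta \langle e^t, \theta_i^{t-1} - \theta_i^{\lambda,*}\rangle$ by Cauchy--Schwarz would then produce the recursion
\begin{equation*}
\|\theta_i^t - \theta_i^{\lambda,*}\|^2 \leq \Bigl(1 - \tfrac{\mu}{2L}\Bigr)\|\theta_i^{t-1} - \theta_i^{\lambda,*}\|^2 + 2\eta^2 \|e^t\|^2 + 2\eta \|e^t\|\,\|\theta_i^{t-1} - \theta_i^{\lambda,*}\|.
\end{equation*}

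To close the loop, I would verify by induction that this recursion admits the solution $\|\theta_i^t - \theta_i^{\lambda,*}\|^2 \leq (1-\mu/(2L))^t \|\theta_i^0 - \theta_i^{\lambda,*}\|^2 + C \lambda^2 \kappa G^2/\mu^2$ for an explicit absolute constant $C$; with the ansatz the fixed-point inequality reduces to $C - 2\sqrt{C} \geq \mu/L$, which holds for any $\mu \leq L$ with $C = 10$. Passing to the function gap via smoothness, $\loss_i^\lambda(\theta_i^T) - \loss_{i,*}^\lambda \leq (L/2)\|\theta_i^T - \theta_i^{\lambda,*}\|^2$, and bounding the initial distance through strong convexity, $\|\theta_i^0 - \theta_i^{\lambda,*}\|^2 \leq (2/\mu)\loss_0$, yields the claimed inequality. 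The main obstacle is the cross term $\langle e^t, \theta_i^{t-1} - \theta_i^{\lambda,*}\rangle$: because $e^t$ is a systematic bias rather than mean-zero noise, a plain Young's split would degrade the contraction rate, so the quadratic induction above is essential to retain the sharp rate $(1-\mu/(2L))^T$ while paying only a constant on the asymptotic noise floor $\lambda^2 \kappa G^2/\mu^2$.
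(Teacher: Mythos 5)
Your setup and your one-step analysis track the paper's proof closely: the same bias decomposition $e^t = \lambda\bigl(R_i^t - \nabla\loss_\H(\theta_i^{t-1})\bigr)$ bounded by $\|e^t\|^2 \le \lambda^2\kappa G^2$ via $(f,\kappa)$-robustness with $\mathcal{U}=\H$ and Assumption~\ref{asp:hetero}, the same use of the non-expansiveness of $\Pi_\Theta$ and of the interior-minimizer condition, and the same exploitation of $\eta=1/(2L)$ to cancel the quadratic gradient term (the paper routes this through $\|\nabla\loss_i^\lambda(\theta)\|^2 \le 2L(\loss_i^\lambda(\theta)-\loss_{i,*}^\lambda)$ rather than co-coercivity, which is immaterial). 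The divergence --- and the gap --- is in the cross term. The paper applies Young's inequality, absorbing $\tfrac{\eta\mu}{2}\|\theta_i^t-\widehat{\theta}_i^{\lambda}\|^2$ into the strong-convexity gain and paying $\mathcal{O}(\tfrac{\eta}{\mu})\lambda^2\|\xi_i^t\|^2$, which yields a purely linear recursion that unrolls as a geometric series. You instead keep the Cauchy--Schwarz term $2\eta\|e^t\|\sqrt{D_{t-1}}$ (writing $D_t = \|\theta_i^t-\theta_i^{\lambda,*}\|^2$) and propose to close an induction with the ansatz $D_t \le \rho^t D_0 + C\lambda^2\kappa G^2/\mu^2$, $\rho = 1-\mu/(2L)$.

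That induction does not close. Your condition $C - 2\sqrt{C} \ge \mu/L$ is only the stationary (fixed-point) part of the inductive step: substituting the ansatz into the recursion also produces the mixed term $2\eta\|e^t\|\sqrt{\rho^{t-1}D_0}$, which decays like $\rho^{t/2}$ rather than $\rho^t$ and scales with $\sqrt{D_0}$, so it can be absorbed neither into $\rho^t D_0$ nor into the constant floor. Concretely, already at $t=1$ the recursion gives $D_1 \le \rho D_0 + 2\eta\|e^1\|\sqrt{D_0} + 2\eta^2\|e^1\|^2$, and for $D_0$ large enough the middle term exceeds $C\lambda^2\kappa G^2/\mu^2$ for any fixed $C$, so the ansatz is refuted by the recursion itself. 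Two standard repairs: (i) induct on $\sqrt{D_t}$ instead, via $\sqrt{D_t}\le\sqrt{\rho}\,\sqrt{D_{t-1}}+c$ with $c=\mathcal{O}(\eta\lambda\sqrt{\kappa}\,G)$, which is sound but after squaring costs a factor $2$ on the transient and a larger constant on the noise floor than the stated $5L/\mu^2$; or (ii) accept the paper's Young's-inequality step. Your instinct that Young's inequality degrades the rate is correct --- the paper's own derivation actually yields the contraction $1-\mu/(4L)$ even though it displays $1-\mu/(2L)$ --- but the workaround you propose is not valid as written.
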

Our optimization error bound recovers the classical GD linear rate when $\lambda = 0$ (local learning), and achieves an asymptotic error in $\mathcal{O}{(\kappa G^2)}$ when $\lambda = 1$ (full collaboration), which is provably tight under Assumption~\ref{asp:hetero}~\citep{karimireddy2022byzantinerobust}.
Interestingly, for $\lambda \in (0,1)$, the asymptotic error smoothly interpolates in $\mathcal{O}{(\lambda^2 \kappa G^2)}$, which is expected as lesser collaboration limits the influence of the adversary on the optimization error.

\subsection{Generalization Gap \& Effect of Collaboration}

Although the presence of adversarial clients impairs the optimization process, one expects a generalization benefit from collaboration if data distributions are similar enough. We formalize this intuition by bounding the generalization gap, of the personalized learning problem~\eqref{eq:pers_objective}.  We first bound the generalization gap on the interpolated loss $\loss_i^\lambda$ using a result adapted from~\citep{Blitzer2007}.
 
\begin{restatable}{lemma}{Hoeffding}
 \label{th:hoeffding_lambda} Let Assumption~\ref{asp:bounded_loss} hold. For any $\delta>0$, $\theta \in \Theta$ and $\lambda \in [0,1]$, we have with probability at least $1-\delta$ (over the choice of samples) that 
    \begin{equation*}
        |  \mathcal{L}_i^{\lambda} (\theta) - \mathcal{R}_i^{\lambda}(\theta) | \leq 2 \beta \sqrt{\left( \frac{\left( 1-\lambda + \frac{\lambda}{n-f}\right)^2}{m}  + \frac{\lambda^2}{m(n-f)} \right)},
    \end{equation*}
where $\truloss_i^{\lambda}(\theta) \coloneqq \mathbb{E}{[\loss_i^{\lambda}(\theta)]}, \forall \theta \in \Theta$, and 
$\beta \coloneqq \sqrt{\vc{\hypspace}\log\left(\frac{em}{\vc{\hypspace}}\right)} + \sqrt{\log(1/\delta)}$.
\end{restatable}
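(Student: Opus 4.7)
The plan is to treat $\mathcal{L}_i^\lambda(\theta)$ as a single empirical process driven by a weighted sum of independent bounded random variables, then use pseudo-dimension–based uniform convergence. First I would rewrite
\begin{equation*}
\mathcal{L}_i^\lambda(\theta) = \Bigl(1-\lambda+\tfrac{\lambda}{n-f}\Bigr)\mathcal{L}_i(\theta) + \tfrac{\lambda}{n-f}\sum_{j\in\H,\,j\neq i}\mathcal{L}_j(\theta),
\end{equation*}
which, expanding each $\mathcal{L}_j$, exhibits $\mathcal{L}_i^\lambda(\theta)$ as $\sum_{j\in\H}\sum_{k=1}^m \alpha_{j}\,\ell(h_\theta(x_j^{(k)}),y_j^{(k)})$ with per-sample weights $\alpha_i = \tfrac{1}{m}(1-\lambda+\tfrac{\lambda}{n-f})$ and $\alpha_j = \tfrac{\lambda}{m(n-f)}$ for $j\neq i$. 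Since samples are i.i.d. inside each client and independent across clients, $\mathbb{E}[\mathcal{L}_i^\lambda(\theta)] = \mathcal{R}_i^\lambda(\theta)$.

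For a fixed $\theta$, I would then apply Hoeffding's inequality to this weighted sum of bounded independent variables, giving a sub-Gaussian tail with effective variance proxy
\begin{equation*}
V \;=\; m\,\alpha_i^2 + (n-f-1)\,m\,\alpha_j^2 \;=\; \tfrac{(1-\lambda+\lambda/(n-f))^2}{m} + \tfrac{(n-f-1)\lambda^2}{m(n-f)^2} \;\leq\; \tfrac{(1-\lambda+\lambda/(n-f))^2}{m} + \tfrac{\lambda^2}{m(n-f)},
\end{equation*}
which matches the expression under the square root in the statement. This already yields the pointwise bound with probability $1-\delta$.

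To make the bound uniform in $\theta\in\Theta$ (needed because we will later apply it at the iterate returned by Algorithm~\ref{alg:ipgd}), I would invoke a pseudo-dimension–based covering/symmetrization argument in the style of Blitzer et al.~(2007) for multi-source weighted risks, combined with Assumption~\ref{asp:bounded_loss}. The loss class $\{(x,y)\mapsto \ell(h_\theta(x),y):\theta\in\Theta\}$ inherits pseudo-dimension at most $\vc{\hypspace}$, so standard chaining gives a uniform deviation bound of order $\sqrt{V}\bigl(\sqrt{\vc{\hypspace}\log(em/\vc{\hypspace})}+\sqrt{\log(1/\delta)}\bigr)$, which after using $\sqrt{a+b}\leq\sqrt{a}+\sqrt{b}$ and absorbing constants yields exactly the claimed form $2\beta\sqrt{V}$.

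The main obstacle is propagating the weighted variance proxy $V$ (rather than the naive $1/m$) through the symmetrization step: one must verify that the weighted Rademacher sum $\sum_{j,k}\alpha_j\,\varepsilon_{j,k}\,\ell(h_\theta(x_j^{(k)}),y_j^{(k)})$ concentrates at scale $\sqrt{V}$ and that the chaining integral on the loss class scales with the same $\sqrt{V}$. This is exactly the content of the weighted-sample extensions of the classical pseudo-dimension bounds from Anthony \& Bartlett, and the argument of Blitzer et al.~(2007) carries over verbatim by treating the $n-f$ correct clients as independent ``domains'' with weights $\alpha_j$; we would cite that result to conclude rather than redo the chaining.
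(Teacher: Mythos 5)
Your proposal follows essentially the same route as the paper's proof: the identical rewriting of $\mathcal{L}_i^{\lambda}$ as a weighted sum of $m(n-f)$ independent bounded losses, the same Hoeffding computation yielding the variance proxy $\frac{(1-\lambda+\lambda/(n-f))^2}{m}+\frac{(n-f-1)\lambda^2}{m(n-f)^2}\leq \frac{(1-\lambda+\lambda/(n-f))^2}{m}+\frac{\lambda^2}{m(n-f)}$, and the same final step of upgrading to a uniform bound via a pseudo-dimension argument (the paper invokes a change of variables plus the pseudo-dimension union bound of Mohri et al., where you cite the Blitzer et al.\ / Anthony--Bartlett weighted-sample version). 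The argument is correct and matches the paper's.
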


When specializing the result of Lemma~\ref{th:hoeffding_lambda} to $\lambda=0$ and $\lambda=1$, we recover the standard generalization gaps $\mathcal{O}{\left(\sqrt{\nicefrac{\vc{\hypspace}}{m}}\right)}$ and $\mathcal{O}{\left(\sqrt{\nicefrac{\vc{\hypspace}}{m(n-f)}}\right)}$ for local learning and federated learning (with correct clients only), respectively. 
However, in addition to the generalization gap bounded in Lemma~\ref{th:hoeffding_lambda}, we need to bound the gap between the interpolated risk~\eqref{eq:pers_objective_rb} and the original local risk~\eqref{eq:true_loss}.
In fact, these two objectives are statistically different when there is data heterogeneity among clients since the interpolated risk~\eqref{eq:pers_objective_rb} involves the average of local loss functions.
To quantify this difference, we leverage tools from domain adaptation theory~\citep{shaiDifferentDomains2010} and consider a function $\Phi$ measuring the discrepancy between two statistical distributions. Formally, we require for every $i \in \H$,

\begin{equation}\label{eq:discrepancy}
    |\mathcal{R}_{i}(\theta) - \mathcal{R}_{\H}(\theta) | \leq \Phi (\mathcal{D}_i, \mathcal{D}_\H), \forall \theta \in \Theta.
\end{equation}
For example, for the case of 0-1 loss in binary classification, for any two distributions $\mathcal{D}_1$ and $\mathcal{D}_2$,~\citep{Blitzer2007} propose the discrepancy measure to be $\Phi (\mathcal{D}_1, \mathcal{D}_2) = d_{\hypspace\Delta\hypspace}(\D_1, \D_2)\coloneqq 2 \sup_{h \in\mathcal{H}} |\mathbb{P}_{\D_1}(I(h)) - \mathbb{P}_{\D_2} ( I(h)) | $,
where $ I(h) \coloneqq \{x \in \X: h(x) = 1\}$.
For more general losses, we can use hypothesis space-dependent Integral Probability Metrics~\citep{IPM, optimizingCrossSilo}, which we include in \cref{app:discepancy}. We now combine our bounds on the optimization error and generalization gap into Theorem~\ref{th:full_gen_gap} below.

\begin{restatable}{theorem}{fullGenGap}

\label{th:full_gen_gap}
Let assumptions~\ref{asp:opt},~\ref{asp:hetero},~\ref{asp:constraint}, and \ref{asp:bounded_loss} hold, and let $\Phi$ be a function such that~\eqref{eq:discrepancy} holds. 
Consider Algorithm~\ref{alg:ipgd} with learning rate $\eta = \frac{1}{2L}$, $\lambda \in [0,1]$, and assume the aggregation function $F$ to be $(f,\kappa)$-robust. Then, for any $\delta>0, T \geq 1$,  we have with probability at least $1-\delta$ (over the choice of samples) that
    \begin{equation}\label{eq:gen_gap}
    \begin{split}
   \mathcal{R}_i (\theta_i^T) -&\mathcal{R}_i (\theta_i^*)   \leq  \left(1-\frac{\mu}{2L} \right)^{T}\frac{L}{\mu} \loss_0 + \frac{5 L \lambda^2 \kappa G^2}{\mu^2} +2\lambda \Phi(\mathcal{D}_i, \mathcal{D}_{\H}) \\
   & + 4  \beta\sqrt{ \frac{\left( 1-\lambda + \frac{\lambda}{n-f}\right)^2}{m}  + \frac{\lambda^2}{m(n-f)}}  ,
 \end{split}
\end{equation}
where 
$\beta \coloneqq \sqrt{\vc{\hypspace}\log\left(\frac{em}{\vc{\hypspace}}\right)} + \sqrt{\log(1/\delta)}$
and $\mathcal{L}_{i,*}^{\lambda} \coloneqq \min\limits_{\theta \in \Theta} \mathcal{L}_{i}^{\lambda}(\theta)$. 

\end{restatable}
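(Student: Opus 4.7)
The plan is to bound the excess true local risk $\mathcal{R}_i(\theta_i^T) - \mathcal{R}_i(\theta_i^*)$ by constructing a telescoping decomposition that transitions from the true local risk $\mathcal{R}_i$, through the true interpolated risk $\mathcal{R}_i^{\lambda}$, down to the empirical interpolated loss $\mathcal{L}_i^{\lambda}$, and then back up again at the comparison point $\theta_i^*$. Concretely, by inserting the intermediate quantities $\mathcal{R}_i^{\lambda}(\theta_i^T)$, $\mathcal{L}_i^{\lambda}(\theta_i^T)$, $\mathcal{L}_{i,*}^{\lambda}$, $\mathcal{L}_i^{\lambda}(\theta_i^*)$, and $\mathcal{R}_i^{\lambda}(\theta_i^*)$, one writes
\begin{align*}
\mathcal{R}_i(\theta_i^T) - \mathcal{R}_i(\theta_i^*) &= \underbrace{\bigl[\mathcal{R}_i(\theta_i^T)-\mathcal{R}_i^{\lambda}(\theta_i^T)\bigr]}_{(A)} + \underbrace{\bigl[\mathcal{R}_i^{\lambda}(\theta_i^T) - \mathcal{L}_i^{\lambda}(\theta_i^T)\bigr]}_{(B)} + \underbrace{\bigl[\mathcal{L}_i^{\lambda}(\theta_i^T) - \mathcal{L}_{i,*}^{\lambda}\bigr]}_{(C)}\\
&\quad + \underbrace{\bigl[\mathcal{L}_{i,*}^{\lambda} - \mathcal{L}_i^{\lambda}(\theta_i^*)\bigr]}_{(D)} + \underbrace{\bigl[\mathcal{L}_i^{\lambda}(\theta_i^*) - \mathcal{R}_i^{\lambda}(\theta_i^*)\bigr]}_{(E)} + \underbrace{\bigl[\mathcal{R}_i^{\lambda}(\theta_i^*) - \mathcal{R}_i(\theta_i^*)\bigr]}_{(F)}.
\end{align*}

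Next I would bound each piece using the tools already established. Terms $(A)$ and $(F)$ are handled purely at the population level: by the definition $\mathcal{R}_i^{\lambda} = (1-\lambda)\mathcal{R}_i + \lambda \mathcal{R}_{\mathcal{C}}$, the difference equals $\lambda(\mathcal{R}_i - \mathcal{R}_{\mathcal{C}})$ evaluated at the corresponding point, which by the discrepancy assumption~\eqref{eq:discrepancy} is bounded in absolute value by $\lambda \Phi(\mathcal{D}_i, \mathcal{D}_{\mathcal{C}})$. This yields the $2\lambda \Phi(\mathcal{D}_i, \mathcal{D}_{\mathcal{C}})$ contribution. Terms $(B)$ and $(E)$ are the generalization gaps on the interpolated loss, and each is bounded by the right-hand side of Lemma~\ref{th:hoeffding_lambda}, i.e., $2\beta \sqrt{(1-\lambda+\lambda/(n-f))^2/m + \lambda^2/(m(n-f))}$, producing the factor $4\beta$. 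Term $(C)$ is precisely the optimization error controlled by Lemma~\ref{th:str_cvx_lam}, contributing the linear-rate term together with $5L\lambda^2\kappa G^2/\mu^2$. Finally, term $(D)$ is non-positive by optimality of $\mathcal{L}_{i,*}^{\lambda}$, and can simply be dropped.

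The one technical subtlety is that $\theta_i^T$ is a data-dependent iterate, so a naive pointwise invocation of Lemma~\ref{th:hoeffding_lambda} is not sufficient for bounding $(B)$. The $\sqrt{\mathrm{Pdim}(\hypspace)\log(em/\mathrm{Pdim}(\hypspace))}$ factor in $\beta$ signals that Lemma~\ref{th:hoeffding_lambda} is genuinely a uniform-convergence statement over $\hypspace$, so the same bound automatically applies to $\theta_i^T$ and, a fortiori, to the deterministic $\theta_i^*$; thus both $(B)$ and $(E)$ hold simultaneously within a single high-probability event. Collecting the bounds on $(A)$--$(F)$ and applying this single high-probability event yields exactly the right-hand side of~\eqref{eq:gen_gap}.

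The main obstacle I anticipate is making sure this uniform-convergence reading of Lemma~\ref{th:hoeffding_lambda} is legitimate for the interpolated loss $\mathcal{L}_i^{\lambda}$, which is a weighted sum of empirical averages over $m$ samples from $\mathcal{D}_i$ and $m(n-f)-m$ samples from the other correct clients; this requires that the underlying uniform deviation bound (e.g., a VC/pseudo-dimension Rademacher argument) be compatible with this mixture of sources, which is the essence of the Blitzer-type lemma being adapted. Once that is in place, the combination step is routine arithmetic.
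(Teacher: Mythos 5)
Your decomposition into the six terms $(A)$--$(F)$ is exactly the paper's decomposition (with $\mathcal{L}_{i,*}^{\lambda}=\mathcal{L}_i^{\lambda}(\widehat{\theta}_i^{\lambda})$), and each term is handled the same way: the discrepancy bound for $(A)+(F)$, Lemma~\ref{th:hoeffding_lambda} for $(B)+(E)$, Lemma~\ref{th:str_cvx_lam} for $(C)$, and dropping the non-positive $(D)$. Your remark that the Hoeffding-type lemma must be read as a uniform-convergence statement over $\Theta$ (to cover the data-dependent iterate $\theta_i^T$) is correct and is precisely what the pseudo-dimension union bound in the paper's proof of Lemma~\ref{th:hoeffding_lambda} provides.
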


\paragraph{Interpretation.} The bound in Theorem~\ref{th:full_gen_gap} features a trade-off between \emph{data heterogeneity}, \emph{sample size}, and \emph{model complexity}.
Indeed, we minimize the bound in Theorem~\ref{th:full_gen_gap} and obtain the following closed-form approximation by ignoring constant and logarithmic terms, and assuming that $n, T \gg 1$ (see Appendix~\ref{a:proofs} for details)
\begin{equation} \label{eq:lam_star}
    \lambda^*  \approx  \Pi_{[0,1]}\left(  \frac{\sqrt{\tfrac{\vc{\hypspace}}{m}} - \Phi(\D_i, \D_\H)}{\tfrac{f}{n} G^2}  \right),
\end{equation}
where $\Pi_{[0,1]}$ denotes the projection over $[0,1]$.
This expression suggests that the optimal collaboration parameter decreases with data heterogeneity. Indeed, if $\Phi(\D_i, \D_\H)>\sqrt{\nicefrac{\vc{\hypspace}}{m}}$, then $\lambda^* = 0$ and local learning, as expected, is optimal. Otherwise, the optimal collaboration parameter can be non-zero. This validates the fact that correct clients cannot improve their local generalization when their local distributions are too dissimilar compared to the complexity of the hypothesis class and the number of data points each of them has. Furthermore, even if the heterogeneity $\Phi(\D_i, \D_\H)$ sufficiently small compared to $\sqrt{\nicefrac{\vc{\hypspace}}{m}}$, the level of collaboration must be re-evaluated according to $\frac{f}{n} G^2$; hence depending on both the fraction of adversaries and the dissimilarity of the gradient between correct clients.

\subsection{Experimental Validation}
\label{sec:binary-exp}

\begin{figure}[h]
  \centering
  \begin{subfigure}[b]{0.34\textwidth}
    \includegraphics[width=\textwidth]{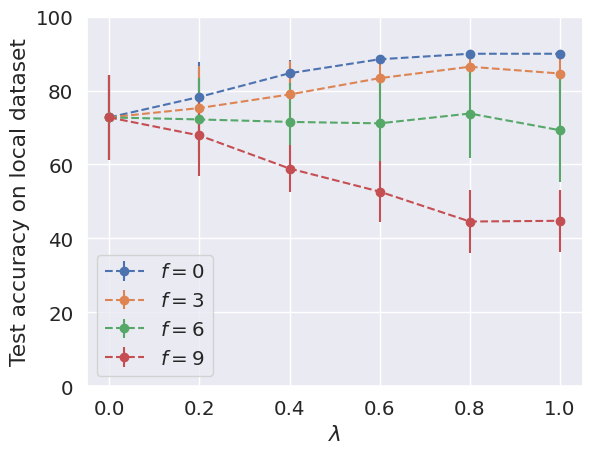}
    \caption{ $ m = 16$}
    \label{subfig:m_16}
  \end{subfigure}
  \begin{subfigure}[b]{0.32\textwidth}
    \includegraphics[width=\textwidth]{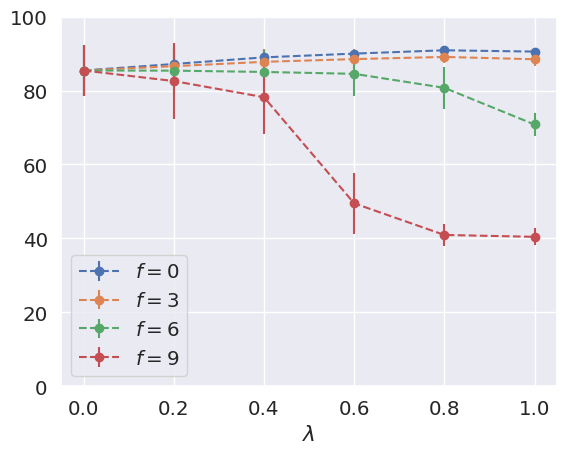}
    \caption{ $ m = 48$}
    \label{subfig:m_48}
  \end{subfigure}
  \begin{subfigure}[b]{0.32\textwidth}
    \includegraphics[width=\textwidth]{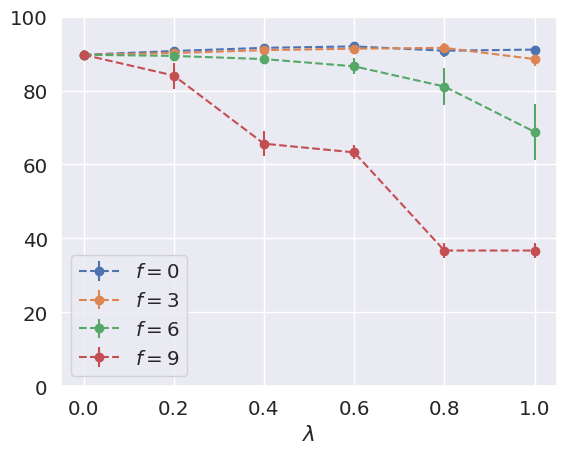}
    \caption{$ m = 128$}
    \label{subfig:m_128}
  \end{subfigure}
  \begin{subfigure}[b]{0.27\textwidth}
    \includegraphics[width=\textwidth]{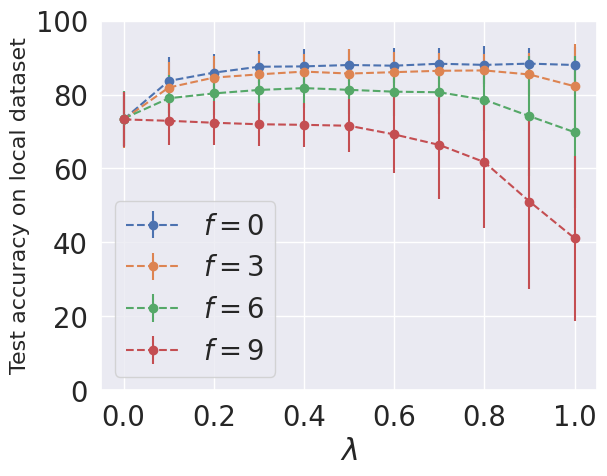}
    \caption{  $\alpha = \infty, m=32$}
  \end{subfigure}
  \begin{subfigure}[b]{0.23\textwidth}
    \includegraphics[width=\textwidth]{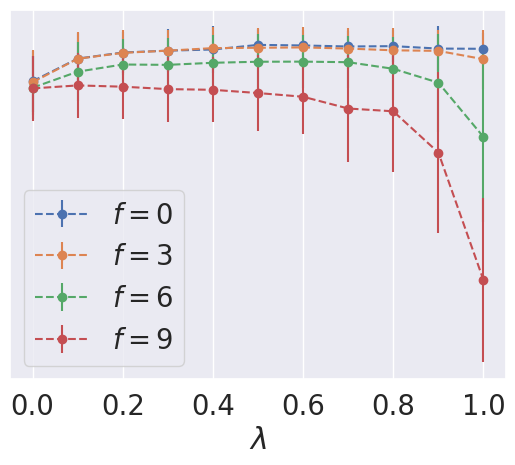}
    \caption{ $\alpha = 0.5, m=32$}
  \end{subfigure}
  \begin{subfigure}[b]{0.23\textwidth}
        \includegraphics[width=\textwidth]{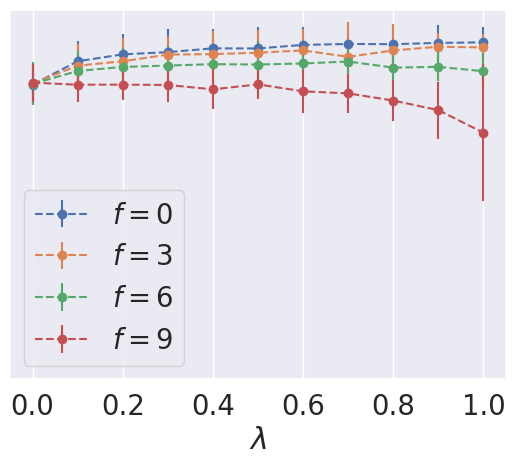}
    \caption{ $\alpha = \infty, m=64$}
  \end{subfigure}
  \begin{subfigure}[b]{0.23\textwidth}
        \includegraphics[width=\textwidth]{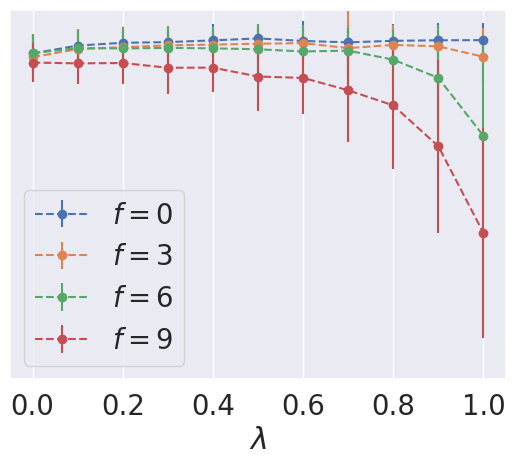}
    \caption{$\alpha = 0.5, m=64$}
  \end{subfigure}
  \caption{ Effect of adversarial fraction and heterogeneity and local sample size. (Top) Phishing dataset with logistic regression with $n=20$, $\alpha=3$. (Bottom) MNIST with a Convolutional Neural Network with $n=20$. $\alpha = \infty$ refers to the homogeneous setting.  }
  \label{fig:Phishing_Mnist}
\end{figure}

\paragraph{Setup.}
We empirically investigate the impact of Byzantine adversaries on the generalization performance, in our personalization framework, using the Phishing dataset \citep{CHIEW2019153} and the MNIST dataset~\citep{mnist}. Other experiments are included in \cref{a:experimental_setting}.
For the different experiments, we use the state-of-the-art defense under data heterogeneity: NNM pre-aggregation \citep{allouah2023fixing} rule followed by trimmed mean as robust aggregation \citep{pmlr-v80-yin18a}. We simulate the Byzantine attack with the Sign Flipping attack~\citep{li2020learning}. Throughout the experiments, we fix $n=20$, and only vary the local dataset size $m$, the fraction of Byzantine adversaries $f$, and the heterogeneity level $\alpha$ which we will explain in the next paragraph. The clients execute \cref{alg:ipgd} for $T$ iterations which depends on the dataset being used. \Cref{fig:Phishing_Mnist}, as well as the figures in \cref{a:experimental_setting}, show the average results and error bars for $5$ random runs.

To simulate heterogeneity, we use a Dirichlet distribution \citep{hsu2019} to generate datasets with unbalanced class fractions for each client. The parameter $\alpha$ determines the heterogeneity degree (the smaller the bigger the heterogeneity). We subsequently sample the test datasets using the same class distribution as the train datasets for each client, and we evaluate the trained models on these local datasets. We defer the implementation details to \cref{a:experimental_setting}.

\cref{fig:Phishing_Mnist}, as well as \cref{fig:phishing_2} from \cref{a:experimental_setting}, show the final local test accuracy performance as a function of the degree of collaboration ($\lambda$) for different values of $f,m$ and $\alpha$. These experimental results shed light on the impact of these different factors and allow us to confirm the main insights of our theory.

\paragraph{Full collaboration can be suboptimal. } This can be gleaned from settings where the adversarial fraction is substantial (over $6$ adversarial clients, e.g. green and red curves in \cref{fig:Phishing_Mnist}), when the heterogeneity is large (small values of $\alpha$, e.g. blue and orange curves in \cref{fig:phishing_2} in \cref{a:experimental_setting}) and when the local dataset is large enough (e.g. \cref{subfig:m_48} and \cref{subfig:m_128}). In situations where the adversarial fraction is critical ($f=9$, corresponding to red curves in \cref{fig:Phishing_Mnist} ), robust federated learning completely fails, achieving accuracy scores under $50\%$ on the local test datasets. \cref{fig:local_vs_FL} in \cref{a:experimental_setting} further illustrates this point, showing that even local learning can be better than state-of-the-art robust Federated Learning in these circumstances.

\paragraph{Fine-tuning the collaboration helps get the best of both worlds. }\cref{fig:Phishing_Mnist}(Bottom) shows that in data scarcity scenarios, for moderate values of the number of adversarial clients ($f=3$ or $f=6$), using a collaboration degree strictly between $0$ and $1$ yields better accuracy on the local test dataset. The same effect also appears in \cref{fig:Phishing_Mnist}(Top) for $f=3$ and \cref{fig:binary_mnist_2} in \cref{a:experimental_setting}. Additionally, even for extreme adversarial fractions ($f=9$ in our experiments), the gain from decreasing the collaboration degree can be substantial compared to simply using Robust Federated Learning methods. This suggests that fine-tuning the collaboration degree can help dampen the detrimental effects of adversarial clients.

\section{Conclusion \& Future Work}

In this paper, we have taken a first step towards understanding how fine-tuning personalization in FL can mitigate the impact of adversarial clients. The results we obtain suggest that the use of personalization can improve the performance of FL algorithms in the presence of adversarial clients, but also that the level of collaboration needs to be chosen carefully. Our theoretical analysis accounts for the necessity to strike a balance between the optimization error, which is impacted by adversarial clients, and the generalization gap, which is likely to benefit from a larger pool of data. We identify the main factors that impact the local performance, namely data heterogeneity, the fraction of adversarial clients, and data scarcity. This work could be extended in several directions. Firstly, as explained in Section~\ref{sec:binary-theory}, the study of Algorithm~\ref{alg:ipgd} primarily serves our understanding of the robust interpolated optimization problem in~\eqref{eq:pers_objective_rb} from an information-theoretic perspective. However, it may not be efficient enough to cope with high-dimensional learning tasks involving a number of clients. An interesting future direction could be to investigate alternative algorithms with lower communication and computational costs, which would be able to handle these high-dimensional problems. Second, our study focuses on a personalization strategy that interpolates between client loss functions. Although this strategy is simple to interpret and allows us to derive optimization and generalization bounds, it is only one of many possible personalized FL schemes. Another interesting avenue would be to investigate whether the insights we drew from our interpolated problem could be derived in a similar fashion from other personalization frameworks such as regularization, layer specialization, or clustering.

\begin{ack}
This work was supported in part by the Swiss National Science Foundation under Grant N°20CH21\_218778, TruBrain and by SNSF grant 200021\_200477. The authors are thankful to the anonymous reviewers for their constructive comments. 
\end{ack}

\bibliography{references}
\bibliographystyle{apalike}
\medskip

{
\small

}


\appendix

\newpage

\section{Related Work } \label{s:related_work}

\paragraph{Personalized federated learning.}
As federated learning methods are typically challenged by data heterogeneity~\citep{kairouz2021advances}, many personalization methods have recently been proposed to learn different models tailored for each client. A first category of personalized learning algorithms is based upon local fine-tuning~\citep{fallah2020personalized,Dinh2020Moreau}, where the goal is to collaboratively find a good global model from which local fine-tuning can result in an improved local performance.
A second category of personalized algorithms is based upon regularization between local models~\citep{li2021ditto, kundu2022robustness}, 
i.e., adding a regularization term penalizing a distance between local client models. 
Another line of research in personalized federated learning considers partial personalization where the parameters are split into globally shared and individual local parameters \citep{bietti2022personalization, mishchenko2023partially}, this, however, creates the additional challenge of choosing this split. Personalization can also be implemented through clustering methods, where participants selectively collaborate with a smaller subset of clients to limit their exposure to heterogeneity \citep{ifca,werner2023provably, optimizingCrossSilo}. \citep{mansour2020three} proposed and analyzed three different interpolation methods to formalize personalization, including the one we consider in this paper, which they refer to as \emph{data interpolation}. Although they discuss the generalization properties of this model, they do not address Byzantine robustness.

\textbf{Personalization for Byzantine robustness.}
A few works in the personalized learning literature suggest that personalization can improve the robustness to Byzantine participants \citep{li2021ditto, kundu2022robustness, mishchenko2023partially}. These works however only present a limited theoretical analysis in the general Byzantine context. For instance, contrary to our work, the theoretical results in \citep{li2021ditto} only cover linear regression and random Gaussian noise attacks.
Moreover, the results of \citep{mishchenko2023partially} concerning Byzantine robustness
make very strong assumptions on the heterogeneity of loss functions, essentially assuming that they share a common minimum, which largely mitigates the effect of Byzantine adversaries on the optimization error~\citep{allouah2023robust}. Both  \citep{kundu2022robustness,mishchenko2023partially} only analyze the optimization error and do not study the generalization error, which we showcase as an important part of the trade-off.

\section{Definitions}
\subsection{Strong Convexity}

A close, but weaker version of strong convexity is the Polyak-Lojasiewicz condition defined below:

\begin{definition}[Polyak-Lojasiewicz (PL)] \label{def:mupl}
    A function $\loss:\mathbb{R}^d \to \mathbb{R}$ is said to satisfy Polyak-Lojasiewicz with parameter $\mu$, noted $\mu$-PL if $\forall x \in \mathbb{R}^d$:
    \begin{equation}
        \loss(x) - \inf_{\R^d} \loss \le \frac{1}{2\mu}  \|\nabla \loss(x)\|^2 
    \end{equation}
\end{definition}

If a function $\loss$ is $\mu$-strongly convex, it satisfies $\mu$-PL condition. Indeed a $\mu$-strongly convex function $\loss$ satisfies: 
\begin{equation}
    \frac{\mu}{2} \| x -y \|^2 + \iprod{\nabla \loss(y)}{x-y}   \leq \loss(x) - \loss(y) .
\end{equation}
Thus, $\mu$-PL can be obtained from the above by minimizing each side with respect to $x$.

\subsection{Discrepancy} \label{app:discepancy}

For the 0-1 loss, let $d_{\mathcal{H}}$ be defined as follows :

\begin{equation}
\label{a:eq:disc_def}
    d_{\mathcal{H}}(D, D') = 2 \sup_{h \in\mathcal{H}} |\mathbb{P}_D ( I(h)) - \mathbb{P}_{D'} ( I(h)) | ,
\end{equation}

where $ I(h) = \{x: h(x) = 1\}$. 

Define the symmetric difference hypothesis space : $\mathcal{H}\Delta \mathcal{H} := \{h(x) \oplus h'(x)\} : h,h'\in \mathcal{H}$. Then following \citep{Blitzer2007}, $d_{\mathcal{H}\Delta \mathcal{H}}$ satisfies \eqref{eq:discrepancy}.

For general losses, we define the following discrepancy:

\begin{equation*}
    \phi(D_1, D_2) = \max_{h \in \mathcal{H}} \left| \mathbb{E}_{D_1} \loss(h(x), y) - \mathbb{E}_{D_2} \loss(h(x), y)  \right| 
\end{equation*}
\section{Proofs} \label{a:proofs}

\subsection{Proof of Mean Point Estimation }

\meanEstimation*
\begin{proof}\label{proof:MPE}

We can bound the error as follows:

\begin{align}
      \mathbb{E}\left[ \Vert y_i^{\lambda} - \mu_i \Vert ^2 \right] &= \mathbb{E}\left[ \left \Vert (1-\lambda ) (\widehat{y_i} - \mu_i)   + \lambda (\widehat{y_{\mathcal{C}}} - \mu_{\mathcal{C}}) +\lambda ( \mu_{\mathcal{C}} - \mu_i )+  \lambda (R - \widehat{y_{\mathcal{C}}}) \right\Vert^2  \right] \nonumber \\
      &\leq 3\mathbb{E} \left[ \left \Vert (1-\lambda) (\widehat{y_i} - \mu_i)   + \lambda (\widehat{y_{\mathcal{C}}} - \mu_{\mathcal{C}}) \right \Vert ^2 \right] + 3\lambda ^2  \mathbb{E}\left[ \Vert R - \widehat{y_{\mathcal{C}}} \Vert^2 \right]  + 3 \lambda^2 \Vert \mu_{\mathcal{C}} - \mu_i \Vert^2. \label{eq:mean_estimation_bound_1} 
\end{align}

where, for the second line we use the triangle inequality and the fact that $(a+b+c)^2 \leq 3(a^2 + b^2 + c^2)$, for all $a,b,c \geq 0$.

\textbf{Upper-bounding the first term.} The first term can be simplified as follows:

\begin{equation*}
\begin{split}
       \mathbb{E}\left[ \left \Vert (1-\lambda) (\widehat{y_i} - \mu_i)   + \lambda (\widehat{y_{\mathcal{C}}} - \mu_\mathcal{C}) \right \Vert ^2 \right]  
        &=  \mathbb{E}\left[ \left\Vert \left(1-\lambda + \frac{\lambda}{|\mathcal{C}|} \right) (  \widehat{y_i} - \mu_i)  + \frac{\lambda}{|\mathcal{C}|} \sum_{j\ne i}  (\widehat{y_j} - \mu_j)\right \Vert^2  \right] \\
        \intertext{Because the data points are sampled independently for each client, we have that }
        \mathbb{E}\left[ \left \Vert (1-\lambda) (\widehat{y_i} - \mu_i)   + \lambda (\widehat{y_{\mathcal{C}}} - \mu_\H) \right \Vert ^2 \right] &= \left(1-\lambda + \frac{\lambda}{|\mathcal{C}|} \right)^2 \mathbb{E} \left[ \left \Vert \widehat{y_i} - \mu_i \right \Vert^2 \right] + \frac{\lambda^2}{|\mathcal{C}|^2}  \mathbb{E} \left[ \left \Vert \sum_{j\ne i}  \left( \widehat{y_j} - \mu_j \right) \right \Vert^2 \right]. \\ 
        \intertext{Because the data points are sampled i.i.d. each client local distributing and independently for each client, we also have}
        \mathbb{E}\left[ \left \Vert (1-\lambda) (\widehat{y_i} - \mu_i)   + \lambda (\widehat{y_{\mathcal{C}}} - \mu_\H) \right \Vert ^2 \right] &= \left(1-\lambda + \frac{\lambda}{|\mathcal{C}|} \right)^2 \mathbb{E} \left[ \left \Vert \widehat{y_i} - \mu_i \right \Vert^2 \right] + \frac{\lambda^2}{|\mathcal{C}|^2}  \sum_{j\ne i} \mathbb{E} \left[ \left \Vert  \left( \widehat{y_j} - \mu_j \right) \right \Vert^2 \right] \\
        &= \left(1-\lambda + \frac{\lambda}{|\mathcal{C}|} \right)^2 \frac{1}{m} \mathbb{E}_{y \sim \D_{i \mid \mathcal{Y}}} \left[ \left \Vert y - \mu_i \right \Vert^2 \right] + \frac{\lambda^2}{|\mathcal{C}|^2}  \sum_{j\ne i} \frac{1}{m} \mathbb{E}_{y \sim \D_{j \mid \mathcal{Y}}} \left[ \left \Vert  y - \mu_j \right \Vert^2 \right] \\
        &= \left( \left(1-\lambda + \frac{\lambda}{|\mathcal{C}|}\right)^2 +  \frac{\lambda^2 (|\mathcal{C}| - 1) }{|\mathcal{C}|^2}  \right) \frac{\sigma^2}{m}.
\end{split}
\end{equation*}
Where the last line comes from the fact that $\mathbb{E}_{y \sim \D_{i \mid \mathcal{Y}}} \left[ \left \Vert y - \mu_i \right \Vert^2 \right]=\sigma_i = \sigma^2$ for all $i \in [n]$. 

\textbf{Upper-bounding the second term.} The second term can be controlled by the $(f,\kappa)$-robustness property of $F$ as follows 

\begin{equation*}
    \lambda ^2  \mathbb{E}\left[ \left \Vert R - \widehat{y_{\mathcal{C}}}  \right \Vert^2 \right] = \lambda ^2 \mathbb{E}\left[ \norm{\aggregation(\widehat{y_1}, \ldots, \, \widehat{y_n}) - \widehat{y_\H} }^2 \right] \leq \lambda ^2 \frac{\kappa}{\card{\mathcal{C}}} \sum_{i \in \mathcal{C}} \mathbb{E}\left[ \norm{\widehat{y_i} - \widehat{y_\mathcal{C}} }^2 \right] 
\end{equation*}

Then using similar arguments as for the first term we have

\begin{align*}
    \mathbb{E}\left[ \norm{\widehat{y_i} - \widehat{y_\mathcal{C}} }^2 \right] &= \mathbb{E}\left[ \norm{\widehat{y_i} - \mu_i + (\mu_i - \mu_\H) + \mu_\H - \widehat{y_\mathcal{C}} }^2 \right] \\
    & = \mathbb{E}\left[ \norm{\widehat{y_i} - \mu_i + \mu_\H - \widehat{y_\mathcal{C}} }^2 \right]  +  \norm{\mu_i - \mu_\H}^2 \\
    & = \mathbb{E}\left[ \norm{ \left(1 - \frac{1}{\card{\H}}\right)(\widehat{y_i} - \mu_i) + \frac{1}{\card{\H}} \sum_{j \neq i} (\mu_j - \widehat{y_j}) }^2 \right]  +  \norm{\mu_i - \mu_\H}^2 \\
 & = \mathbb{E}\left[ \norm{ \left(1 - \frac{1}{\card{\H}}\right)(\widehat{y_i} - \mu_i)}^2 \right] + \mathbb{E}\left[ \norm{ \frac{1}{\card{\H}} \sum_{j \neq i} (\mu_j - \widehat{y_j}) }^2 \right]  +  \norm{\mu_i - \mu_\H}^2 \\
  & = \left(1 - \frac{1}{\card{\H}}\right)^2 \frac{\sigma^2}{m} +   \frac{(\card{\H} -1)}{\card{\H}^2} \frac{\sigma^2}{m}  +  \norm{\mu_i - \mu_\H}^2 \\
  & = \left( \left(1 - \frac{1}{\card{\H}}\right)^2 +  \frac{(\card{\H} -1)}{\card{\H}^2} \right) \frac{\sigma^2}{m} +  \norm{\mu_i - \mu_\H}^2
\end{align*}

Substituting this in the above we get 

\begin{align*}
   \lambda ^2  \mathbb{E}\left[ \left \Vert R - \widehat{y_{\mathcal{C}}}  \right \Vert^2 \right] &\leq  \kappa \lambda ^2 \left( \left(1 - \frac{1}{\card{\H}}\right)^2 +  \frac{(\card{\H} -1)}{\card{\H}^2} \right) \frac{\sigma^2}{m} + \frac{\kappa \lambda ^2}{\card{\mathcal{C}}} \sum_{j \in \mathcal{C}} \norm{\mu_j - \mu_\H}^2 \\
   &\leq \frac{\card{\mathcal{C}} -1 }{\card{\mathcal{C}} }\kappa \lambda^2  \frac{\sigma^2}{m}  + \frac{\kappa \lambda ^2}{\card{\mathcal{C}}} \sum_{j \in \mathcal{C}} \norm{\mu_j - \mu_\H}^2
\end{align*}

\textbf{Conclusion.} By substituting the above in~\eqref{eq:mean_estimation_bound_1}, we get the following. 

\begin{equation}
\begin{split}
     \mathbb{E}[ \norm{ y_i^{\lambda} - \mu_i }^2 ]  &\leq 3\left( \frac{\card{\mathcal{C}} -1 }{\card{\mathcal{C}} }   \kappa  \lambda ^2+  \left(1-\lambda + \frac{\lambda}{|\mathcal{C}|}\right)^2 +  \frac{\lambda^2 (|\mathcal{C}| - 1)}{|\mathcal{C}|^2}   \right)  \frac{\sigma^2 }{m}  \\& \qquad \qquad + 3 \lambda^2 \Vert \mu_i - \mu_{\mathcal{C}}\Vert ^2 + 3 \frac{\kappa \lambda ^2}{\card{\mathcal{C}}} \sum_{j \in \mathcal{C}} \norm{\mu_j - \mu_\H}^2
\end{split}
\end{equation}

Using the notation $ \Delta^2 =    \frac{1}{\card{\mathcal{C}}} \sum_{j \in \mathcal{C}} \norm{\mu_j - \mu_\H}^2 $, we get

\begin{equation*}
    \begin{split}
        \mathbb{E}[ \norm{ y_i^{\lambda} - \mu_i }^2 ]  &\leq  3 \left(  \frac{\card{\H} -1 }{\card{\H} } \lambda ^2  (\kappa+1) +  1 - 2\frac{\card{\H} -1 }{\card{\H} } \lambda  \right)  \frac{\sigma^2 }{m} +  3 \lambda^2 (\Vert \mu_i - \overline{\mu}_{\H}\Vert ^2 + \kappa \Delta^2) .
    \end{split}
\end{equation*}

\end{proof}

\subsection{Proof of Lemma~\ref{th:str_cvx_lam}}

\convergence*
\begin{proof}
Let assumptions~\ref{asp:opt},~\ref{asp:hetero}, and~\ref{asp:constraint} hold.
First, we note that $\loss_i^\lambda$ is $L$-smooth (and $\mu$-strongly convex) as a convex combination of $\loss_i$ and $\loss_\H$, which are both $L$-smooth (and $\mu$-strongly convex) by Assumption~\ref{asp:opt}. In the remainder of this proof, we denote by $\widehat{\theta}_i^{\lambda}$ the minimizer of $\mathcal{L}_i^{\lambda}$ on $\Theta$, i.e., $\mathcal{L}_i^{\lambda}(\widehat{\theta}_i^{\lambda}) = \mathcal{L}_{i,*}^{\lambda}$. Recall also that, by Assumption~\ref{asp:constraint}, $\widehat{\theta}_i^{\lambda}$ is also a critical point of $\mathcal{L}_i^{\lambda}$. 

Let $t \in \{0, \ldots, T-1\}$ and denote $\xi_i^t \coloneqq R_i^t - \nabla \mathcal{L}_{\mathcal{C}}(\theta_i^t)$.
We rewrite the update step as follows: 
\begin{equation*}
    \theta_i^{t+1} = \Pi_{\Theta}{\left(\theta_i^t - \eta \left((1-\lambda) \nabla  \mathcal{L}_i(\theta_i^t) + \lambda R_i^t \right) \right)} = \Pi_{\Theta}{\left(\theta_i^t - \eta \left(\nabla  \mathcal{L}_i^{\lambda}(\theta_i^t)  + \lambda \left( R_i^t - \nabla \mathcal{L}_{\mathcal{C}}(\theta_i^t) \right) \right)\right)}.
\end{equation*}

Since the Euclidean projection operator $\Pi_{\Theta}$ is non-expansive,
and $\widehat{\theta}_i^{\lambda} \in \Theta$ by definition, we have
\begin{align}
\label{eq:Lemma1LipschitzProjection}\|\theta_i^{t+1} - \widehat{\theta}_i^{\lambda}\|^2 \leq \|\theta_i^t - \widehat{\theta}_i^{\lambda} - \eta \left( \nabla\mathcal{L}_i^{\lambda}(\theta_i^t) + \lambda \xi_i^t \right)\|^2.
\end{align}
By developing the right-hand side~\eqref{eq:Lemma1LipschitzProjection} and using Jensen's inequality, we get
\begin{align*}
\| \theta_i^{t+1} - \widehat{\theta}_i^{\lambda}\|^2 
&\leq \|\theta_i^t - \widehat{\theta}_i^{\lambda} \|^2 - 2 \eta  \left< \theta_i^t - \widehat{\theta}_i^{\lambda} , \nabla\mathcal{L}_i^{\lambda}(\theta_i^t) \right> - 2 \eta \left<\theta_i^t - \widehat{\theta}_i^{\lambda} , \lambda \xi_i^t \right> + \eta^2    \|\nabla\mathcal{L}_i^{\lambda}(\theta_i^t)+\lambda \xi_i^t \|^2\\
&\leq \|\theta_i^t - \widehat{\theta}_i^{\lambda} \|^2 - 2 \eta  \left< \theta_i^t - \widehat{\theta}_i^{\lambda} , \nabla\mathcal{L}_i^{\lambda}(\theta_i^t) \right> - 2 \eta \left<\theta_i^t - \widehat{\theta}_i^{\lambda} , \lambda \xi_i^t \right> + 2\eta^2\|\nabla\mathcal{L}_i^{\lambda}(\theta_i^t)\|^2 +2\eta^2\lambda^2 \| \xi_i^t \|^2.
\end{align*}
Furthermore, by Assumption~\ref{asp:constraint}, we know that $\widehat{\theta}_i^{\lambda}$ is a critical point of $\mathcal{L}_i^{\lambda}$ (Assumption~\ref{asp:opt}), hence using the $L$-smooth of $\mathcal{L}_i^{\lambda}$ we have $\|\nabla\mathcal{L}_i^{\lambda}(\theta_i^t)\|^2 \leq 2L(\loss_i^{\lambda}(\theta_i^t) - \loss_{i,*}^\lambda)$. Moreover, since $\loss_i^\lambda$ is $\mu$-strongly convex (Assumption~\ref{asp:opt}), we have $\left< \theta_i^t - \widehat{\theta}_i^{\lambda} , \nabla\mathcal{L}_i^{\lambda}(\theta_i^t) \right> \geq \loss_i^{\lambda}(\theta_i^t) - \loss_{i,*}^\lambda + \tfrac{\mu}{2} \|\theta_i^t - \widehat{\theta}_i^{\lambda} \|^2$. Substituting these in the above yields
\begin{align*}
\| \theta_i^{t+1} - \widehat{\theta}_i^{\lambda}\|^2 
&\leq \|\theta_i^t - \widehat{\theta}_i^{\lambda} \|^2 - 2 \eta (\loss_i^{\lambda}(\theta_i^t) - \loss_{i,*}^\lambda) -\eta \mu \|\theta_i^t - \widehat{\theta}_i^{\lambda} \|^2 \\
&\quad - 2 \eta \left<\theta_i^t - \widehat{\theta}_i^{\lambda} , \lambda \xi_i^t \right> + 4\eta^2L(\loss_i^{\lambda}(\theta_i^t) - \loss_{i,*}^\lambda) +2\eta^2\lambda^2 \| \xi_i^t \|^2.
\end{align*}
Given that $\eta=\tfrac{1}{2L}$, we can simplify this inequality as follows,
\begin{align*}
\| \theta_i^{t+1} - \widehat{\theta}_i^{\lambda}\|^2 
&\leq \|\theta_i^t - \widehat{\theta}_i^{\lambda} \|^2  -\eta \mu \|\theta_i^t - \widehat{\theta}_i^{\lambda} \|^2
- 2 \eta \left<\theta_i^t - \widehat{\theta}_i^{\lambda} , \lambda \xi_i^t \right> + 2\eta^2\lambda^2 \| \xi_i^t \|^2.
\end{align*}
We now use Young's inequality on scalar product to get $- \left<\theta_i^t - \widehat{\theta}_i^{\lambda} , \lambda \xi_i^t \right> \leq \tfrac{\mu}{4}\|\theta_i^t - \widehat{\theta}_i^{\lambda} \|^2 + \tfrac{4}{\mu} \lambda^2 \norm{\xi_t^t}^2$. Substituting these in the above yields
\begin{align*}
\| \theta_i^{t+1} - \widehat{\theta}_i^{\lambda}\|^2 
&\leq \|\theta_i^t - \widehat{\theta}_i^{\lambda} \|^2  -\eta \mu \|\theta_i^t - \widehat{\theta}_i^{\lambda} \|^2
+\eta\frac{\mu}{2}\|\theta_i^t - \widehat{\theta}_i^{\lambda} \|^2 + \eta\frac{8}{\mu} \lambda^2 \norm{\xi_t^t}^2  +2\eta^2\lambda^2 \| \xi_i^t \|^2\\
&\leq \left(1-\frac{\eta \mu}{2} \right)\|\theta_i^t - \widehat{\theta}_i^{\lambda} \|^2  +\lambda^2 \left( \eta \frac{8}{\mu} + 2\eta^2 \right) \| \xi_i^t \|^2.
\end{align*}
Substituting $\eta=\frac{1}{2L}$ in the above, and using the fact that $L \geq \mu$, we obtain
\begin{align*}
\| \theta_i^{t+1} - \widehat{\theta}_i^{\lambda}\|^2 
&\leq \left(1-\frac{\mu}{2L} \right) \|\theta_i^t - \widehat{\theta}_i^{\lambda} \|^2  +\lambda^2 \left( \frac{4}{\mu L} + \frac{1}{2L^2} \right) \| \xi_i^t \|^2 \leq \left(1-\frac{\mu}{2L} \right)\|\theta_i^t - \widehat{\theta}_i^{\lambda} \|^2  +\lambda^2\frac{5}{\mu L} \| \xi_i^t \|^2.
\end{align*}
By $(f,\kappa)$-robustness of $F$, and Assumption~\ref{asp:hetero}, we have $\| \xi_i^t \|^2 \leq \tfrac{\kappa}{\card{\H}} \sum_{i \in \H}\|\nabla \loss_i(\theta_i^t) - \nabla \loss_{\H}(\theta_i^t)\|^2 \leq \kappa G^2$. Hence, we get
\begin{align*}
\| \theta_i^{t+1} - \widehat{\theta}_i^{\lambda}\|^2 
&\leq \left(1-\frac{\mu}{2L} \right)\|\theta_i^t - \widehat{\theta}_i^{\lambda} \|^2  +\lambda^2\frac{5}{\mu L} \kappa G^2.
\end{align*}
Recursively using the above, we get 
\begin{align}
\label{eq:Lemma1:recursion}
    \| \theta_i^{t+1} - \widehat{\theta}_i^{\lambda}\|^2 
&\leq \frac{5 \lambda^2 \kappa G^2}{\mu L}\sum_{k=0}^{t}\left(1-\frac{\mu}{2L}\right)^k + \left(1-\frac{\mu}{2L} \right)^{t+1} \|\theta_i^{0} - \widehat{\theta}_i^{\lambda} \|^2 \\&\leq \frac{10 \lambda^2 \kappa G^2}{\mu^2} + \left(1-\frac{\mu}{2L} \right)^{t+1}\|\theta_i^{0} - \widehat{\theta}_i^{\lambda} \|^2.
\end{align}
Combining Assumption~\ref{asp:opt} and Assumption~\ref{asp:constraint}, we easily get  that $\tfrac{\mu}{2}\|\theta - \widehat{\theta}_i^{\lambda}\|^2 \leq \loss_i^{\lambda}(\theta) - \loss_{i,*}^\lambda \leq \tfrac{L}{2} \|\theta - \widehat{\theta}_i^{\lambda}\|^2$ for all $\theta \in \Theta$.
Using this, and specializing \eqref{eq:Lemma1:recursion} for $t=T-1$, we have
\begin{align*}
   \loss_i^{\lambda}(\theta_i^T) - \loss_{i,*}^\lambda 
&\leq \frac{5 L \lambda^2 \kappa G^2}{\mu^2} + \left(1-\frac{\mu}{2L} \right)^{T}\frac{L}{\mu}(\loss_i^{\lambda}(\theta_i^0) - \loss_{i,*}^\lambda) .
\end{align*}
\end{proof}

\subsection{Proof of \cref{th:hoeffding_lambda}} \label{proof:hoeffding}

We start by reminding Hoeffding inequality: 

\begin{lemma}[Hoeffding inequality]
    If $X_1, \dots, X_N$ are real-valued independent random variables, each almost surely belonging to interval $[a_i, b_i]$, then the sum $S_N = \sum\limits_{i\in [N]} X_i$ satisfies
    \begin{equation}
        \mathbb{P}\left[|S_N - \mathbb{E}\left[S_N\right] | \ge \varepsilon \right] \leq 2 \exp \left( \frac{-2\varepsilon^2}{\sum\limits_{i \in [N]} (b_i - a_i)^2}\right)
    \end{equation}
\end{lemma}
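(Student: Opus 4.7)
The plan is to follow the classical Chernoff–Cramér route: bound the one-sided deviation via an exponential Markov inequality, use independence to factor the moment generating function (MGF), control each factor with Hoeffding's lemma for bounded random variables, and finally optimize the free parameter and symmetrize for the two-sided statement.

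First I would reduce to centered variables by setting $Y_i \coloneqq X_i - \mathbb{E}[X_i]$, so that $Y_i$ is independent, zero-mean, and supported in $[a_i - \mathbb{E}[X_i],\, b_i - \mathbb{E}[X_i]]$, an interval of length $b_i - a_i$. For any $s>0$, applying Markov's inequality to $e^{s(S_N - \mathbb{E}[S_N])} = \prod_{i=1}^N e^{sY_i}$ gives
\begin{equation*}
\mathbb{P}\left[S_N - \mathbb{E}[S_N] \geq \varepsilon\right]
\leq e^{-s\varepsilon}\,\mathbb{E}\!\left[\prod_{i=1}^N e^{sY_i}\right]
= e^{-s\varepsilon}\prod_{i=1}^N \mathbb{E}\!\left[e^{sY_i}\right],
\end{equation*}
where the factorization uses the independence assumption.

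The core technical step is Hoeffding's lemma: for a zero-mean random variable $Y$ bounded in $[c,d]$, one has $\mathbb{E}[e^{sY}] \leq \exp\!\bigl(s^2(d-c)^2/8\bigr)$. I would prove this by convexity of $x \mapsto e^{sx}$ on $[c,d]$, which yields $e^{sY} \leq \tfrac{d-Y}{d-c}e^{sc} + \tfrac{Y-c}{d-c}e^{sd}$; taking expectations (using $\mathbb{E}[Y]=0$) and then studying the function $\varphi(u) \coloneqq \log\!\bigl(\tfrac{d}{d-c}e^{uc} + \tfrac{-c}{d-c}e^{ud}\bigr)$ via its Taylor expansion around $u=0$ (noting $\varphi(0)=\varphi'(0)=0$ and $\varphi''(u) \leq (d-c)^2/4$) gives the bound. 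Applying this to each $Y_i$ with $d-c = b_i - a_i$ produces
\begin{equation*}
\mathbb{P}\left[S_N - \mathbb{E}[S_N] \geq \varepsilon\right]
\leq \exp\!\left(-s\varepsilon + \frac{s^2}{8}\sum_{i=1}^N (b_i - a_i)^2\right).
\end{equation*}

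Finally, I would optimize over $s>0$. Minimizing the right-hand side yields $s^\star = 4\varepsilon / \sum_i (b_i-a_i)^2$, giving the one-sided bound $\exp\!\bigl(-2\varepsilon^2 / \sum_i (b_i-a_i)^2\bigr)$. Running the identical argument on $-S_N$ (which also has independent summands bounded in intervals of the same widths) controls the lower tail, and a union bound combines the two to produce the factor of $2$ in the stated inequality. The main obstacle is really Hoeffding's lemma itself — the outer Chernoff/independence/optimization steps are routine, whereas the convexity argument and the second-derivative estimate $\varphi''(u) \leq (d-c)^2/4$ are the content that yields the sharp constant $2$ in the exponent.
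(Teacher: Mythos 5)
Your proposal is a correct and complete account of the classical Chernoff--Hoeffding argument: the centering, the factorization of the moment generating function by independence, Hoeffding's lemma with the sharp constant $(d-c)^2/8$ via the second-derivative bound $\varphi''(u) \leq (d-c)^2/4$, the optimization $s^\star = 4\varepsilon/\sum_i (b_i-a_i)^2$, and the union bound for the two-sided statement all check out. Note that the paper does not prove this lemma at all --- it is merely recalled as a standard textbook fact before being applied in the proof of the generalization bound --- so there is no alternative argument to compare against; your write-up is the standard proof one would cite.
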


Back to our problem: 

\Hoeffding*

\begin{proof}
\begin{equation}
\begin{split}
    \mathcal{L}_i^{\lambda} (\theta) &= (1-\lambda)  \mathcal{L}_i(\theta) + \lambda \mathcal{L}_{\H}(\theta) \\
    &= (1-\lambda) \left(   \frac{1}{m} \sum_{(x,y) \in S_i} \ell(h_\theta(x),y) \right) + \lambda
    \left( \frac{1}{|\H|}  \sum_{j\in \H}  \frac{1}{m} \sum_{(x,y) \in S_j} \ell(h_\theta(x),y) \right) \\
    &= \left(1-\lambda + \frac{\lambda}{|\H|} \right) \left(   \frac{1}{m} \sum_{(x,y) \in S_i} \ell(h_\theta(x),y) \right) + \lambda   \left( \frac{1}{|\H|}  \sum_{\substack{j\in \H\\j \ne i}} \left( \frac{1}{m} \sum_{(x,y) \in S_j} \ell(h_\theta(x),y)  \right)\right)\\
    &= \frac{1}{m \card{\H}} \left( \sum_{(x,y) \in S_i} |\H| \left(1-\lambda + \frac{\lambda}{|\H|} \right) \ell(h_\theta(x),y) + \sum_{\substack{j\in \H\\j \ne i}} \sum_{(x,y) \in S_j} \lambda \ell(h_\theta(x),y) \right).
\end{split}
\end{equation}

Let us now consider the set of $m \card{\H}$ real-valued independent\footnote{The data sets have been sampled independently of each other and are composed of points sampled i.i.d from the local data distributions.} random variables defined as $\left \{ |\H| \left(1-\lambda + \frac{\lambda}{|\H|} \right) \ell(h_\theta(x),y) \mid (x,y)\in S_i \right \} \cup \left \{ \lambda \ell(h_\theta(x),y) \mid (x,y) \in S_j, \forall j\neq i \right \}$. As per \cref{asp:bounded_loss}, we know that $\ell$ takes its values in $[0,1]$, hence this set is composed of $m$ random variables with values in $\left[0,|\H| \left( 1-\lambda + \frac{\lambda}{|\H|} \right) \right]$ and  $(\card{\H}-1)m$ others with values in $\left[0, \lambda \right]$. Finally, note that by definition $ \mathbb{E}\left[ \mathcal{L}_i^{\lambda} (\theta) \right] = \mathcal{R}_i^{\lambda} (\theta)$. Where the expectation is taken over the independent sampling of the local data sets $(S_i)_{i \in \H}$ from the local data distribution of the correct clients. Thus using Hoeffding inequality, we get

\begin{equation}
    \begin{split}
        \mathbb{P}\left(|\mathcal{L}_i^{\lambda} (\theta) - \mathcal{R}_i^{\lambda} (\theta)| \ge \varepsilon \right) \leq 2 \exp \left( \frac{-2m|\H|^2 \varepsilon^2 }{\left( 1-\lambda + \frac{\lambda}{|\H|}\right)^2 |\H|^2 + (|\H| -1) \lambda^2 }   \right).
    \end{split}
\end{equation}

By using the change of variables $\varepsilon' = \sqrt{ \frac{|\H|^2  }{\left( 1-\lambda + \frac{\lambda}{|\H|}\right)^2 |\H|^2 + (|\H| -1) \lambda^2 }  \varepsilon  }$, we can use the Pseudo-dimension union bound (e.g. Theorem 11.8 from \citep{mohri2018foundations} to get that with probability at least $1-\delta$, $\forall \theta$

 \begin{align*} 
       | \mathcal{L}_i^{\lambda} (\theta) - \mathcal{R}_i^{\lambda} (\theta) |    
       &\leq 2 \sqrt{ \frac{\left( 1-\lambda + \frac{\lambda}{|\H|}\right)^2}{m}  + \frac{\lambda^2(|\H| -1)}{m|\H|^2} } \left(\sqrt{\vc{\hypspace}\log\left(\frac{em}{\vc{\hypspace}}\right)} + \sqrt{\log(1/\delta)} \right)\\
       &\leq 2 \sqrt{ \frac{\left( 1-\lambda + \frac{\lambda}{n-f}\right)^2}{m}  + \frac{\lambda^2}{m(n-f)} } \left(\sqrt{\vc{\hypspace}\log\left(\frac{em}{\vc{\hypspace}}\right)} + \sqrt{\log(1/\delta)} \right).
    \end{align*}
\end{proof}

\subsection{Proof of \cref{th:full_gen_gap}}
\label{proof:full_gen_gap}

\fullGenGap*
\begin{proof}

In order to control the quantity $\mathcal{R}_i (\theta_i^t) -\mathcal{R}_i (\theta_i^*)$, we can write:

\begin{equation}
    \begin{split}
        \mathcal{R}_i (\theta_i^t) -\mathcal{R}_i (\theta_i^*) \leq &  \mathcal{R}_i (\theta_i^t) -  \mathcal{R}_i^{\lambda} (\theta_i^t) \qquad \text{(1)} \\ 
         +& \mathcal{R}_i^{\lambda} (\theta_i^t) - \mathcal{L}_i^{\lambda} (\theta_i^t) \qquad \text{(2)} \\
         +&  \mathcal{L}_i^{\lambda} (\theta_i^t) - \mathcal{L}_i^{\lambda} (\widehat{\theta}_i^{\lambda}) \qquad \text{(3)} \\
         +& \mathcal{L}_i^{\lambda} (\widehat{\theta}_i^{\lambda})  - \mathcal{L}_i^{\lambda} (\theta_i^*) \qquad \text{(4)} \\
         +& \mathcal{L}_i^{\lambda} (\theta_i^*) - \mathcal{R}_i^{\lambda} (\theta_i^*) \qquad \text{(5)} \\
         +& \mathcal{R}_i^{\lambda} (\theta_i^*)  - \mathcal{R}_i (\theta_i^*) \qquad \text{(6)} ,
    \end{split}
\end{equation}
where $\widehat{\theta}_i^{\lambda}$ is the minimizer in $\Theta$ of the empirical objective function $\mathcal{L}_i^{\lambda}$, which we assume is strongly convex. 

\textbf{For lines $1$ and $6$}, we use the property of our loss function described in \eqref{eq:discrepancy}, to write 

\begin{equation}
        (1) + (6) = \lambda \left( \mathcal{R}_i(\theta_i^t)-\mathcal{R}_C(\theta_i^t) \right) + \lambda \left( \mathcal{R}_C(\theta_i^*) - \mathcal{R}_i(\theta_i^*) \right) \leq 2\lambda \Phi(\mathcal{D}_i, \mathcal{D}_{\H}).
\end{equation}

\textbf{For lines $2$ and $5$}, since Assumption~\eqref{asp:bounded_loss} holds true, we can use the results from \cref{th:hoeffding_lambda}. Let $\beta \coloneqq \vc{\hypspace}\log(2m \card{\H}) + \log(4/\delta)$. By applying \cref{th:hoeffding_lambda} with $\delta/2$ for both $(2)$ and $(5)$, we get that with probability at least $1-\delta$:

\begin{equation}
        (2) + (5) \leq  
         4  \beta \sqrt{\left(\frac{\left( 1-\lambda + \frac{\lambda}{|\H|}\right)^2}{m}  + \frac{\lambda^2(|\H| -1)}{m|\H|^2} \right)}.
\end{equation}

\textbf{For line $3$}, recall that assumptions~\ref{asp:opt} and~\ref{asp:hetero}, that $\eta = \frac{1}{2L}$, and that $F$ is assumed to be $(f,\kappa)$-robust. Hence, Lemma~\ref{th:str_cvx_lam} holds true.  we use~\eqref{eq:Lemma1:recursion} in the proof of Lemma~\ref{th:str_cvx_lam}, to get

\begin{align*}
\loss_i^{\lambda} \left(\theta_i^t \right) - \loss_{i}^\lambda (\widehat{\theta}_i^{\lambda})
&\leq \frac{5 L \lambda^2 \kappa G^2}{\mu^2} + \left(1-\frac{\mu}{2L} \right)^{t}\frac{L}{\mu} \left(\loss_i^{\lambda} (\theta_i^0) - \loss_{i}^\lambda(\widehat{\theta}_i^{\lambda}) \right)
\end{align*}

\textbf{For line $4$}, we can simply disregard it as (4) contributes a negative amount to the inequality.

Finally, combining all the above inequalities we get
\begin{equation}
\begin{split}
\label{eq:theorem1Final}
   \mathcal{R}_i (\theta_i^t) -\mathcal{R}_i (\theta_i^*)   \leq & \frac{5 L \lambda^2 \kappa G^2}{\mu^2}+ 2\lambda \Phi(\mathcal{D}_i, \mathcal{D}_{\H}) +  4 \beta\sqrt{\left(\frac{\left( 1-\lambda + \frac{\lambda}{|\H|}\right)^2}{m}  + \frac{\lambda^2(|\H| -1)}{m|\H|^2} \right)}  \\
   & +  \left(1-\frac{\mu}{2L} \right)^{t}\frac{L}{\mu} \left(\mathcal{L}_i^{\lambda}(\theta_i^0) - \mathcal{L}_i^{\lambda}(\widehat{\theta}_i^{\lambda}) \right).
 \end{split}
\end{equation}
We conclude by substituting $|\H|=n-f$.

\end{proof}

\subsection{Proof of \eqref{eq:lam_star}} \label{proof:lam_star}
\textbf{Equation ~\eqref{eq:lam_star}}
\begin{equation*} 
    \lambda^*  \approx  \Pi_{[0,1]}\left(  \frac{\sqrt{\tfrac{\vc{\hypspace}}{m}} - \Phi(\D_i, \D_\H)}{\tfrac{f}{n} G^2}  \right),
\end{equation*}

\begin{proof}
    
Here, our goal is to reasonably approximate the value of lambda $\lambda^*$ for which the right-hand side of~\eqref{eq:theorem1Final} is minimized. To do so, we first ignore the asymptotically negligible terms. Essentially, we consider that $T$ is large enough so that the term $\left(1-\frac{\mu}{2L} \right)^{T}\frac{L}{\mu} \left(\loss_i^{\lambda} (\theta_i^0) - \loss_{i}^\lambda(\widehat{\theta}_i^{\lambda}) \right)$ is close enough to zero. In the reminder, we denote by $\theta_i^{\infty}$ the output of Algorithm~\ref{alg:ipgd} for $T \rightarrow \infty$. Even after ignoring this term, seeking the optimal value $\lambda^*$ is the solution to a 4th-degree equation, which may not admit a close-form solution in general. To make the computation of an approximate minimizer more tractable, we proceed with the following simplification using the fact that $\sqrt{a+b} \leq \sqrt{a} + \sqrt{b}$ for any $a,b \geq 0$:

\begin{equation}
\label{eq:simplification}
\mathcal{R}_i (\theta_i^{\infty}) -\mathcal{R}_i (\theta_i^*)   \leq \alpha \lambda^2  +  2\lambda  \Phi(\mathcal{D}_i, \mathcal{D}_{\H}) + 4 \frac{1-(1-\frac{1}{|\H|} )\lambda}{\sqrt{m}} \beta + 4\frac{\lambda\sqrt{|\H|-1}}{|\H|\sqrt{m}}  \beta, 
\end{equation}

with $\alpha:= \frac{6 L \kappa G^2}{\mu^2}$. Now the right-hand side is a quadratic function of $\lambda$ which can be easily minimized. Specifically, the solution of \eqref{eq:simplification} is

\begin{equation}
\begin{split}
    \lambda^* &= 2 \left( 1 - \frac{1}{|\H|}\right)\frac{\beta}{\alpha\sqrt{m}} - 2\frac{\sqrt{|\H|-1}}{|\H|\alpha\sqrt{m}}\beta - \frac{\Phi(\mathcal{D}_i, \mathcal{D}_{\H})}{\alpha} 
\end{split}
\end{equation}
Assuming $|\H|$ is large enough, we simplify the above as

\begin{equation}
\begin{split}
    \lambda^* &\approx  \frac{2\beta}{\alpha\sqrt{m}} - \frac{\Phi(\mathcal{D}_i, \mathcal{D}_{\H})}{\alpha} \\
    &\approx  \frac{1}{\alpha}  \left( \frac{2\beta}{\sqrt{m} }  - \Phi(\mathcal{D}_i, \mathcal{D}_{\H}) \right) \end{split}
\end{equation}

\end{proof}

\section{Experimental Details}\label{a:experimental_setting}
In this section, we give the implementation details of our experiments for each setting and we provide some additional figures. For all the experiments, we run \cref{alg:ipgd} with full gradients for a number $T$ of iterations which we change depending on the dataset. 

\subsection{Compute}

We run our experiments on a server with the following specifications:
    \begin{itemize}
        
    \item HPe DL380 Gen10
    \item 2 x Intel(R) Xeon(R) Platinum 8358P CPU @ 2.60GHz
    \item 128 GB of RAM
    \item 740GB ssd disk
    \item 2 Nvidia A10 GPU cards
   
    \end{itemize}
    
For most of the experiments, we only use one of the two GPUs.  

Using these compute resources, each experiment on the MNIST dataset (meaning each subfigure in \cref{fig:Phishing_Mnist}(bottom)) took less than $72$ hours to run. Each experiment on Phishing dataset took less than $8$ hours to run.

\subsection{ MNIST}
For the experiments on MNIST, we use the following Convolutional Neural Network:

\begin{itemize}
    \item Convolutional Layer (1, 32, 5, 1) + ReLU + Maxpooling
    \item Convolutional Layer (32, 64, 5, 1) + ReLU + Maxpooling
    \item Fully Connected Layer (4096, 1024) + ReLU
    \item Fully Connected Layer (1024, 10)
\end{itemize}

We use $T = 100$ and the learning rate $\eta = 0.05$.

\subsection{Binary MNIST}

We run some experiments on a binarized version of the MNSIT dataset, where the task is to output whether or not the MNIST class of the image, which corresponds to a digit between $0$ and $9$, is strictly smaller than $5$. 

We use $T = 150$ and the learning rate $\eta = 0.002$.

\begin{figure}[H]
  \centering
    \includegraphics[width=0.5\textwidth]{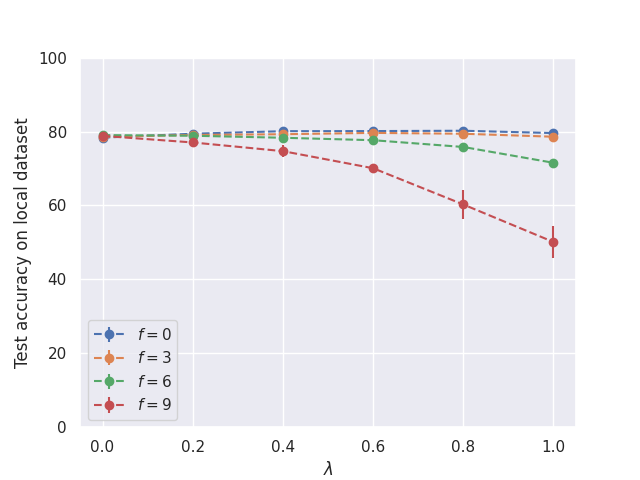}

  \caption{Effect of Byzantine fraction. Binary MNIST with logistic regression.  $m=256, \alpha = 3$ }
  \label{fig:Binary_Mnist}
\end{figure}

\begin{figure}[h]
  \centering
  \begin{subfigure}[b]{0.45\textwidth}
    \includegraphics[width=\textwidth]{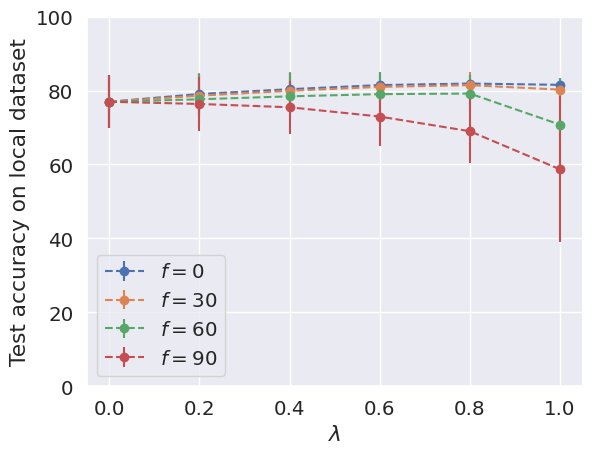}
    \caption{ $\alpha = 1$}
    \label{subfig:bin_mnist_alpha_1}
  \end{subfigure}
  \begin{subfigure}[b]{0.45\textwidth}
        \includegraphics[width=\textwidth]{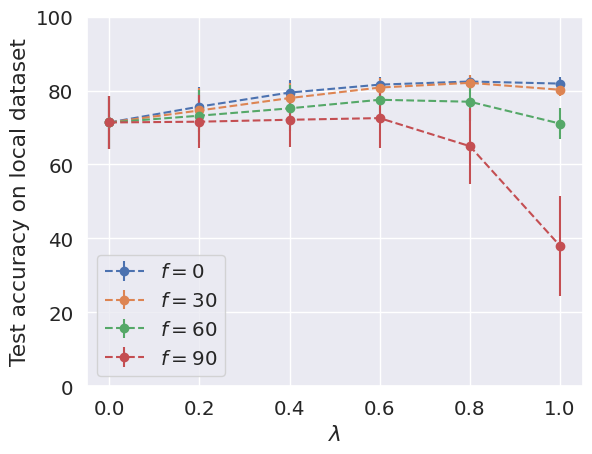}
    \caption{$\alpha = 2$}
    \label{subfig:bin_mnist_alpha_2}
  \end{subfigure}
  \caption{Effect of adversarial fraction and heterogeneity. Binary MNSIT dataset with logistic regression. $n=200, m=32$. }
  \label{fig:binary_mnist_2}
\end{figure}

\subsection{Phishing}

We use $T = 500$ and the learning rate $\eta = 0.1$.

\begin{figure}[H]
    \centering
    \includegraphics[width=\textwidth]{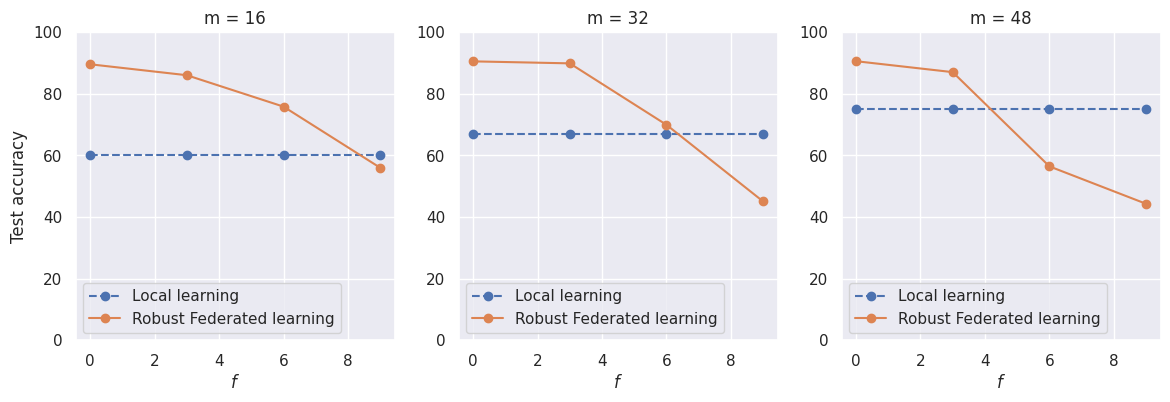}
    \caption{Local Vs FL performance on local test dataset. Phishing dataset with $n=20,  \alpha =3$. As the number of local samples increases, the Byzantine fraction threshold above which local learning performs better than Robust Federated Learning gets smaller. }
    \label{fig:local_vs_FL}
\end{figure}

\begin{figure}[h]
  \centering
  \begin{subfigure}[b]{0.335\textwidth}
    \includegraphics[width=\textwidth]{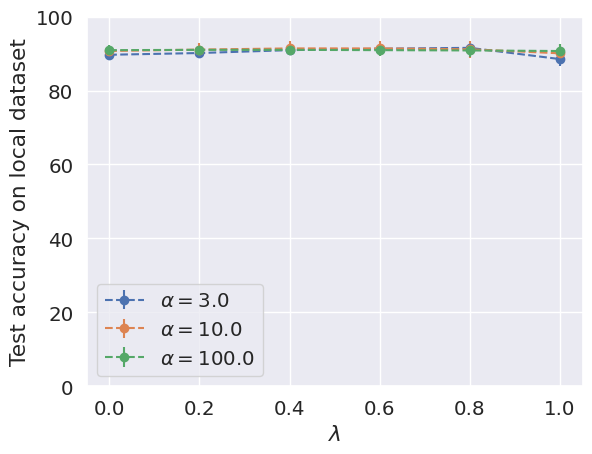}
    \caption{ $f = 3$}
  \end{subfigure}
  \begin{subfigure}[b]{0.32\textwidth}
        \includegraphics[width=\textwidth]{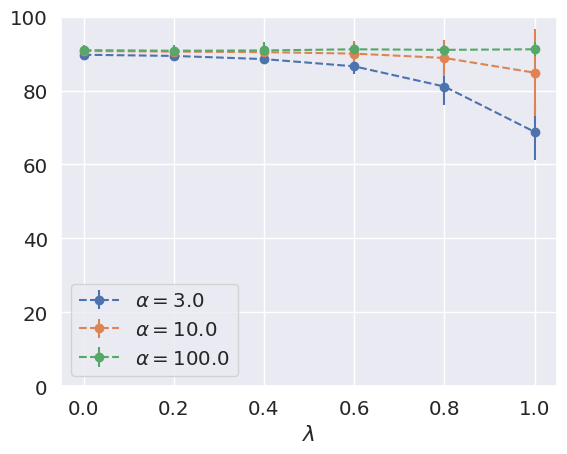}
    \caption{$f = 6$}
  \end{subfigure}
  \begin{subfigure}[b]{0.32\textwidth}
        \includegraphics[width=\textwidth]{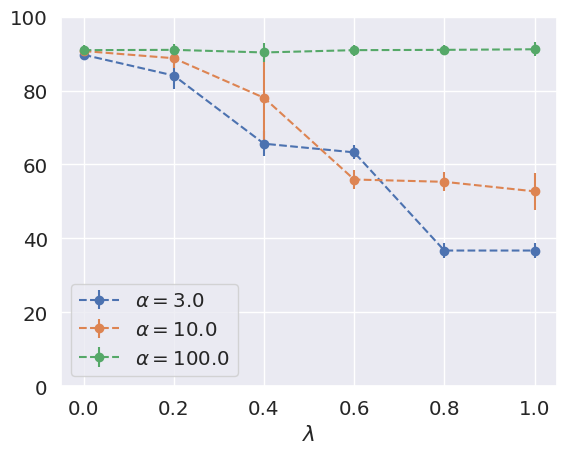}
    \caption{$f = 9$}
  \end{subfigure}
  \caption{Effect of adversarial fraction and heterogeneity. Phishing dataset with logistic regression. $n=20$, $m=128$. $\alpha = 100$ corresponds practically to the homogeneous case. }
  \label{fig:phishing_2}
\end{figure}

\begin{figure}[h]
  \centering
  \begin{subfigure}[b]{0.32\textwidth}
    \includegraphics[width=\textwidth]{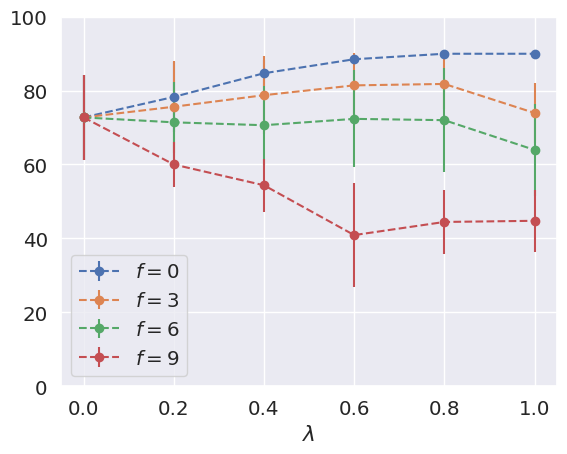}
    \caption{ $ m = 16$}
    \label{subfig:m_16_FOE}
  \end{subfigure}
  \begin{subfigure}[b]{0.32\textwidth}
    \includegraphics[width=\textwidth]{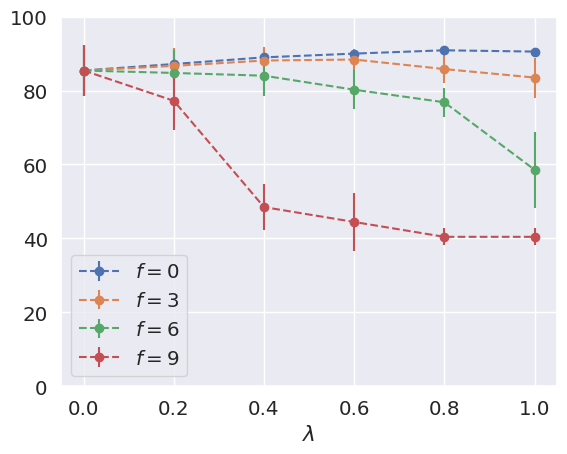}
    \caption{ $ m = 48$}
    \label{subfig:m_48_FOE}
  \end{subfigure}
  \begin{subfigure}[b]{0.32\textwidth}
    \includegraphics[width=\textwidth]{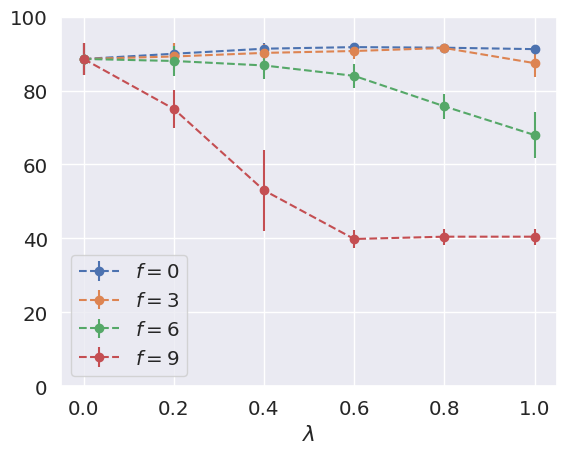}
    \caption{$ m = 128$}
    \label{subfig:m_128_FOE}
  \end{subfigure}
  \begin{subfigure}[b]{0.32\textwidth}
    \includegraphics[width=\textwidth]{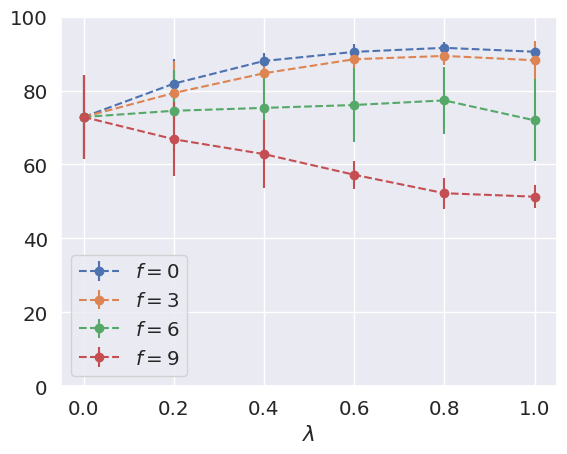}
    \caption{ $ m = 16$}
    \label{subfig:m_16_FOE_}
  \end{subfigure}
  \begin{subfigure}[b]{0.32\textwidth}
    \includegraphics[width=\textwidth]{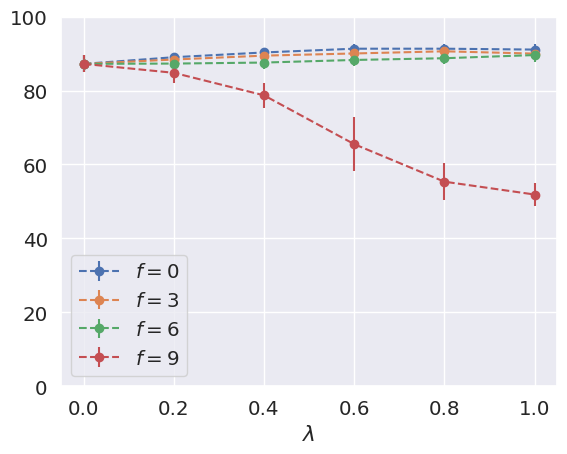}
    \caption{ $ m = 48$}
    \label{subfig:m_48_FOE_}
  \end{subfigure}
  \begin{subfigure}[b]{0.32\textwidth}
    \includegraphics[width=\textwidth]{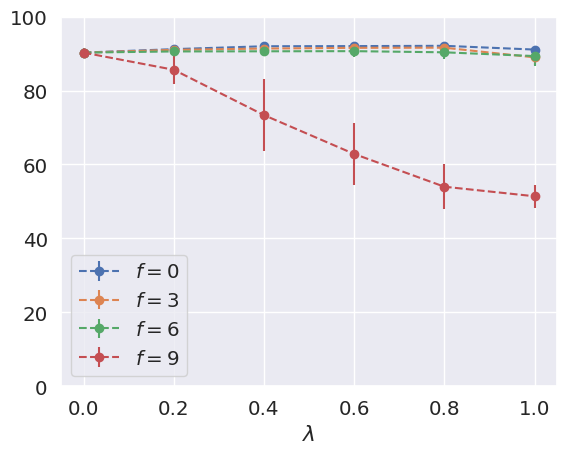}
    \caption{$ m = 128$}
    \label{subfig:m_128_FOE_}
  \end{subfigure}
  \caption{Effect of the adversarial fraction and the data size. Phishing with logistic regression. $n=20$, Auto-FOE attack, (top) $\alpha=3$, (bottom) $\alpha =10$.}
  \label{fig:phishing_FOE}
\end{figure}



\newpage

\end{document}